\newtheorem{theorem}{Theorem}
\newtheorem{lemma}{Lemma}
\newtheorem{definition}{Definition}
\newcommand{\nosemic}{\renewcommand{\@endalgocfline}{\relax}}
\newcommand{\dosemic}{\renewcommand{\@endalgocfline}{\algocf@endline}}
\let\oldnl\nl
\newcommand{\nonl}{\renewcommand{\nl}{\let\nl\oldnl}}
\begin{document}
%
\title{Securing Distributed SGD against \\ Gradient Leakage Threats}

\author{Wenqi Wei\IEEEauthorrefmark{1}\IEEEauthorrefmark{2}, 
Ling~Liu\IEEEauthorrefmark{2},~\IEEEmembership{Fellow,~IEEE,}  
Jingya Zhou\IEEEauthorrefmark{3}, Ka-Ho Chow\IEEEauthorrefmark{2}, 
and Yanzhao Wu\IEEEauthorrefmark{4}\IEEEauthorrefmark{2}
\IEEEcompsocitemizethanks{
\IEEEcompsocthanksitem \IEEEauthorrefmark{2}School of Computer Science, Georgia Institute of Technology, Atlanta, GA, 30332. 
\IEEEcompsocthanksitem \IEEEauthorrefmark{1}Computer and Information Science Department, Fordham University, New York City, NY, 10023.  
\IEEEcompsocthanksitem \IEEEauthorrefmark{3}School of Computer Science and Technology, Soochow University, Suzhou, Jiangsu, China, 215006. 
\IEEEcompsocthanksitem \IEEEauthorrefmark{4}School of Computing and Information Sciences, Florida International University, 
Miami, FL, 33199.  
\IEEEcompsocthanksitem \indent E-mail: 
wenqiwei@fordham.edu, ling.liu@cc.gatech.edu,  jy\_zhou@suda.\\edu.cn, 
khchow@gatech.edu, yawu@fiu.edu.}
\thanks{Manuscript received xxxx xx, xxxx; revised xxxxx xx, xxxx.}
}

%
%

\markboth{Journal of \LaTeX\ Class Files,~Vol.~14, No.~8, August~2015}%
{Shell \MakeLowercase{\textit{et al.}}: Bare Demo of IEEEtran.cls for Computer Society Journals}
%



\IEEEtitleabstractindextext{%
\begin{abstract}
This paper presents a holistic approach to gradient leakage resilient distributed Stochastic Gradient Descent (SGD). 
{\em First}, we analyze two types of strategies for privacy-enhanced federated learning: (i) gradient pruning with random selection or low-rank filtering and (ii) gradient perturbation with additive random noise or differential privacy noise. We analyze the inherent limitations of these approaches and their underlying impact on privacy guarantee, model accuracy, and attack resilience.
{\em Next}, we present a gradient leakage resilient approach to securing distributed SGD in federated learning, with differential privacy controlled noise as the tool. Unlike conventional methods with the per-client federated noise injection and fixed noise parameter strategy, our approach keeps track of the trend of per-example gradient updates. It  makes adaptive noise injection closely aligned throughout the federated model training.
{\em Finally}, we provide an empirical privacy analysis on the privacy guarantee, model utility, and attack resilience of the proposed approach. 
Extensive evaluation using five benchmark datasets demonstrates that our gradient leakage resilient approach 
can outperform the state-of-the-art methods with competitive accuracy performance, strong differential privacy guarantee, and high resilience against gradient leakage attacks. The code associated with this paper can be found: \url{https://github.com/git-disl/Fed-alphaCDP}.
\end{abstract}

\begin{IEEEkeywords}
federated learning, distributed system, gradient leakage attack, privacy analysis
\end{IEEEkeywords}}


\maketitle

\IEEEdisplaynontitleabstractindextext

%
\IEEEpeerreviewmaketitle

\IEEEraisesectionheading{\section{Introduction}\label{sec:intro}}

\IEEEPARstart{F}{ederated} learning is an emerging distributed machine learning paradigm with default privacy. It enables collaborative model training over a large corpus of decentralized data residing on a distributed population of autonomous clients, and all clients can keep their sensitive data private. Clients perform local training with Stochastic Gradient Descent (SGD) or SGD-alike learning and only share local model updates via encrypted communication with the federated server for server aggregation. To deal with potentially unpredictable client availability, only a small subset $K_t$ of the total population of $N$ clients ($K_t \leq N$) is required to contribute to each round $t$ ($1\leq t \leq T$) in federated learning~\cite{mcmahan2017communication}.

\textbf{Privacy Concerns in Federated Learning.} Despite the default privacy by keeping client data local, recent studies reveal that gradient leakage attacks in federated learning may compromise the privacy of client training data \cite{zhu2019deep,zhao2020idlg,geiping2020inverting,wei2020framework,yin2021see}. For example, an unauthorized read on the gradients generated from local SGD can lead to exploitation of per-example gradient updates using the reconstruction-based gradient leakage attack method, resulting in unauthorized disclosure of client private training data. Similarly, peeping at the shared gradient for global aggregation or global-level SGD at the server may give away sensitive information about the training data from a particular client.
Consequently, three fundamental questions are raised in this reality of federated learning: (1) Can a model trained with privacy-enhanced federated learning be scalable for minimizing unauthorized disclosure of sensitive local training data on its clients? (2) What level of resilience can a privacy-preserving federated learning solution provide to protect the privacy of its clients against gradient leakage threats? (3) Can a federated learning model trained with differential privacy guarantee provide high model accuracy and strong gradient leakage resilience at the same time? In this paper, we attempt to answer these questions and argue that an effective differentially private federated learning algorithm can be robust and effective against gradient leakages with high accuracy performance.

In this paper, we first characterize three types of gradient leakage attacks based on where gradients are leaked and the threat model. Then we study gradient compression and gradient perturbation techniques and analyze the inherent limitations of these approaches on gradient leakage resilience. Motivated by the analysis, we present a resilience-enhanced approach to federated learning, using differential privacy controlled noise as the tool. Our method, coined as Fed-$\alpha$CDP, extends the conventional per-client approaches by per-example gradient perturbation and augments with adaptive parameter optimizations. Our approach makes three improvements over existing state-of-the-art solutions. 
{\it First}, 
we inject differential privacy controlled noise to per-example gradients during client's local training such that the local SGD at client is performed on the perturbed gradients. As a result, the local SGD is differentially private. The parameter updates resulting from client's local training and shared with the federated server at each round are perturbed rather than in the original raw format, ensuring that the server-side aggregation is differentially private by composition theorems of differential privacy. 
{\it Second}, in addition to the location of the noise injection,
we revisit the use of the fixed clipping bound that defines the sensitivity in existing differentially private deep learning to address the inherent problem of the resulting constant Gaussian noise variance 
throughout the multi-round local SGD and global aggregation in federated learning.  Given that gradient magnitude demonstrates a decreasing trend as the learning processes, we alternatively define the sensitivity of a differentially private federated learning algorithm using the $l_2$ max of gradients, which results in decreasing Gaussian noise variance during the local SGD at clients' local training. 
{\it Third}, 
we introduce the dynamic decaying noise scale $\sigma$ to enable Gaussian noise variance to follow the trend of gradient updates so that noise variance and noise injection in local SGD are closely aligned. 
We perform an empirical privacy analysis on the proposed Fed-$\alpha$CDP with respect to privacy and utility measured by privacy spending, model accuracy, and gradient leakage resilience. 
With extensive experiments conducted using five benchmark datasets, we evaluate the effectiveness of Fed-$\alpha$CDP in comparison to the existing state-of-the-art approaches from three perspectives: (i) resilience against three types of gradient leakage attacks, (ii) accuracy performance under the same privacy budget, and (iii) privacy guarantee under the same target accuracy  (utility). We show that the proposed solution approach outperforms other alternatives with the best overall performance in terms of gradient leakage resilience, differential privacy guarantee, and model accuracy.


\section{Preliminary}
\label{sec:preliminary}

\subsection{ Differential Privacy: Definitions}
\label{sec:dp-definitions}

\begin{definition}\textbf{Differential privacy~\cite{dwork2014algorithmic}}: Let $\mathcal{D}$ be the domain of possible input data and $\mathcal{R}$ be the range of all possible output. A randomized mechanism $\mathcal{M}$: $\mathcal{D}\rightarrow \mathcal{R}$ satisfies ($\epsilon,\delta$)-differential privacy if for any two input sets $A \subseteq \mathcal{D}$ and $A'\subseteq \mathcal{D}$, differing with only one entry: $||A-A'||_{0}=1$, Equation~\ref{equa:dp} holds with $0 \leq \delta < 1$ and $\epsilon>0$.
\vspace{-0.1cm}
\begin{equation}
\Pr(\mathcal{M}(A) \in \mathcal{R}) \le e^{\epsilon}\Pr(\mathcal{M}(A') \in \mathcal{R}) + \delta.
\label{equa:dp}
\end{equation}
\label{def:dp}
\vspace{-0.7cm}
\end{definition}

A smaller $\epsilon$ would indicate a smaller difference between the output of $\mathcal{M}(A)$ and the output of $\mathcal{M}(A')$. Since $\delta$ is the upper bound probability of $\mathcal{M}(A)$ for breaking $\epsilon$-differential privacy,
a small $\delta$ is desired. Following the literature~\cite{abadi2016deep,yu2019differentially,papernot2018scalable}, $\delta$ is set to $1e-5$ for the rest of the paper. 

\begin{definition}
\textbf{Sensitivity~\cite{dwork2006calibrating}}: Let $\mathcal{D}$ be the domain of possible input and $\mathcal{R}$ be the domain of possible output.  The sensitivity of function $f: \mathcal{D} \rightarrow \mathcal{R}$ is the maximum amount that the function value varies when a single input entry is changed. 
\vspace{-0.1cm}
\begin{equation}
S=\max \nolimits_{A,A' \subseteq \mathcal{D},
||A-A'||_{0}=1} ||f(A)-f(A')||_p.
\label{equa:sensitivity}
\end{equation}
\label{def:sensitivity}
\vspace{-0.7cm}
\end{definition}
\noindent
By Equation~\ref{equa:sensitivity}, if one wants to produce a differentially private algorithm $\mathcal{M}(f)$ by injecting random noise under a specific distribution (e.g., Laplace or Gaussian) to function $f$, one needs to scale the noise to match (bound by) the maximum change defined as the sensitivity of function $f$ with neighboring inputs. 



\begin{theorem}
\textbf{Gaussian mechanism~\cite{dwork2014algorithmic}}: Let $\mathcal{D}$ be the domain of possible input data and $\mathcal{R}$ be the range of all possible output.  With arbitrary privacy parameter $\epsilon \in (0,1)$, applying Gaussian noise $\mathcal{N}(0, \varsigma {^2})$ calibrated to a real-valued function: $f: \mathcal{D}\rightarrow \mathcal{R}$ with noise variance $\varsigma {^2}$ such that $\mathcal{M}(A) = f(A) + \mathcal{N}(0, \varsigma^2)$ is $(\epsilon,\delta)$-differentially private if $\varsigma {^2} > \frac{{2\log (1.25/\delta ) \cdot {S^2}}}{{{\epsilon ^2}}}$. 
\label{theorem:gaussian}
\end{theorem}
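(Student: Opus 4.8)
The plan is to argue via the \emph{privacy loss random variable} and to reduce the $d$-dimensional Gaussian to a one-dimensional tail estimate. First I would fix two neighboring inputs $A,A'$ and set $\Delta = f(A)-f(A')$, so that $\|\Delta\|_2 \le S$ by Definition~\ref{def:sensitivity}. Writing an output of $\mathcal{M}(A)$ as $o = f(A)+Z$ with $Z \sim \mathcal{N}(0,\varsigma^2 I)$, the privacy loss is
\[
\mathcal{L}(o) = \ln\frac{p_{\mathcal{M}(A)}(o)}{p_{\mathcal{M}(A')}(o)} = \frac{\|o-f(A')\|_2^2 - \|o-f(A)\|_2^2}{2\varsigma^2} = \frac{\|\Delta\|_2^2 + 2\langle \Delta,Z\rangle}{2\varsigma^2},
\]
which depends on $Z$ only through $\langle \Delta, Z\rangle \sim \mathcal{N}(0,\varsigma^2\|\Delta\|_2^2)$; by spherical symmetry of the Gaussian this is exactly the problem of distinguishing $\mathcal{N}(0,\varsigma^2)$ from $\mathcal{N}(\|\Delta\|_2,\varsigma^2)$ in one dimension. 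Hence $\mathcal{L} \sim \mathcal{N}\!\left(\frac{\|\Delta\|_2^2}{2\varsigma^2}, \frac{\|\Delta\|_2^2}{\varsigma^2}\right)$, and since both its mean and its variance increase with $\|\Delta\|_2$, the worst case over neighbors is $\|\Delta\|_2 = S$.

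Next I would invoke the standard sufficient condition for $(\epsilon,\delta)$-differential privacy: if $\Pr_{o\sim\mathcal{M}(A)}[\mathcal{L}(o) > \epsilon] \le \delta$ for every pair of neighbors, then $\mathcal{M}$ is $(\epsilon,\delta)$-DP. This follows by splitting an arbitrary output event $\mathcal{S}$ into $\mathcal{S}\cap\{\mathcal{L}>\epsilon\}$ and $\mathcal{S}\setminus\{\mathcal{L}>\epsilon\}$: the first piece has $\mathcal{M}(A)$-probability at most $\delta$, and on the second piece the density ratio is at most $e^\epsilon$, so $\Pr[\mathcal{M}(A)\in\mathcal{S}] \le \delta + e^\epsilon\Pr[\mathcal{M}(A')\in\mathcal{S}]$. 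It then remains to bound $\Pr[\mathcal{L}>\epsilon]$ with $\mathcal{L}\sim\mathcal{N}(S^2/2\varsigma^2, S^2/\varsigma^2)$; writing $\mathcal{L} = \frac{S^2}{2\varsigma^2} + \frac{S}{\varsigma}W$ with $W\sim\mathcal{N}(0,1)$, this equals $\Pr\!\left[W > \frac{\epsilon\varsigma}{S} - \frac{S}{2\varsigma}\right]$.

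The main obstacle is this last tail estimate, which is precisely where the constant $1.25$ and the factor $2$ come from. I would first observe that the hypothesis $\varsigma^2 > \frac{2\log(1.25/\delta)S^2}{\epsilon^2}$ together with $\epsilon\in(0,1)$ makes the threshold $t := \frac{\epsilon\varsigma}{S} - \frac{S}{2\varsigma}$ positive, then apply the Gaussian tail bound $\Pr[W>t] \le \frac{1}{t\sqrt{2\pi}}e^{-t^2/2}$ and verify by a monotonicity-in-$\varsigma$ argument that the right-hand side stays below $\delta$ once $\varsigma/S \ge \sqrt{2\log(1.25/\delta)}/\epsilon$ --- it suffices to check the inequality at that threshold value of $\varsigma$ and note that enlarging $\varsigma$ only shrinks the tail. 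This algebraic verification (essentially the computation in \cite{dwork2014algorithmic}) is the only delicate part; combining it with the two reductions above yields the theorem.
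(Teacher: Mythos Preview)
The paper does not actually prove Theorem~\ref{theorem:gaussian}: it is stated as a cited result from~\cite{dwork2014algorithmic} and no argument is given beyond the statement (only the immediate corollary Lemma~\ref{lemma:gaussian_mechanism} is noted afterward). Your proposal is therefore not competing with any proof in the paper; rather, you are reconstructing the standard proof from the cited reference, and you do so correctly. The reduction to a one-dimensional problem via the privacy-loss random variable, the sufficient condition $\Pr[\mathcal{L}>\epsilon]\le\delta$, and the Gaussian tail bound with the constant $1.25$ are exactly the ingredients in Dwork--Roth, so there is nothing to contrast.

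Two small remarks. First, the theorem as written in the paper speaks of ``a real-valued function'' $f:\mathcal{D}\to\mathcal{R}$, whereas you silently treat the output as $d$-dimensional with covariance $\varsigma^2 I$; this is harmless (your argument specializes to $d=1$) but worth flagging since the paper's later use of the mechanism is on high-dimensional gradients. Second, your claim that ``the worst case over neighbors is $\|\Delta\|_2=S$'' because both mean and variance of $\mathcal{L}$ increase with $\|\Delta\|_2$ is a slight shortcut: what you really need is that $\Pr[\mathcal{L}>\epsilon]$ is monotone in $\|\Delta\|_2$, which follows since the threshold $t=\epsilon\varsigma/\|\Delta\|_2 - \|\Delta\|_2/(2\varsigma)$ is decreasing in $\|\Delta\|_2$. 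Neither point is a gap; the outline is sound.
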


\begin{lemma}
Let $\varsigma^2$ in Gaussian mechanism be $\sigma^2S^2$ where $\sigma$ is the noise scale, and $S$ is the $l_2$ sensitivity. We have the noise scale $\sigma$ satisfying $\sigma^2>\frac{{2\log (1.25/\delta )}}{{{\epsilon ^2}}}.$
\label{lemma:gaussian_mechanism}
\end{lemma}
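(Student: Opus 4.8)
The plan is to derive the stated bound on $\sigma$ directly from the Gaussian mechanism in Theorem~\ref{theorem:gaussian} by a simple reparameterization of the noise variance. First I would recall that Theorem~\ref{theorem:gaussian} guarantees $(\epsilon,\delta)$-differential privacy for the mechanism $\mathcal{M}(A) = f(A) + \mathcal{N}(0,\varsigma^2)$ whenever the Gaussian noise variance satisfies $\varsigma^2 > \frac{2\log(1.25/\delta)\cdot S^2}{\epsilon^2}$, where $S$ is the $l_2$ sensitivity of $f$ from Definition~\ref{def:sensitivity}.

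Next I would substitute the change of variables $\varsigma^2 = \sigma^2 S^2$, i.e., write the Gaussian noise variance as the product of the squared noise scale $\sigma^2$ and the squared sensitivity $S^2$. Plugging this into the sufficient condition of Theorem~\ref{theorem:gaussian} gives $\sigma^2 S^2 > \frac{2\log(1.25/\delta)\cdot S^2}{\epsilon^2}$. Assuming the mechanism is non-degenerate so that $S>0$ (the case $S=0$ is trivial, since then $f$ is data-independent and the mechanism is private for any choice of $\sigma$), I would divide both sides by $S^2$ to obtain $\sigma^2 > \frac{2\log(1.25/\delta)}{\epsilon^2}$, which is exactly the claimed bound.

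There is essentially no technical obstacle here beyond checking that the reparameterization is well-posed; the only point requiring a word of care is the degenerate case $S=0$, which I would dispatch with the remark above. The real content of the lemma is conceptual rather than computational: it decouples the noise magnitude into a sensitivity-free ``noise scale'' $\sigma$ whose admissible lower bound depends only on the privacy parameters $(\epsilon,\delta)$. This is precisely the separation the paper exploits later, where $\sigma$ is made to decay adaptively while the sensitivity $S$ is allowed to track the $l_2$ max of the per-example gradients over the course of training.
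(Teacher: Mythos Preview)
Your proposal is correct and matches the paper's approach exactly: the paper simply states that Lemma~\ref{lemma:gaussian_mechanism} ``is straightforward with Theorem~\ref{theorem:gaussian},'' and what you have written is precisely the one-line substitution $\varsigma^2=\sigma^2 S^2$ followed by cancellation of $S^2$ that makes this claim explicit. Your side remark on the degenerate case $S=0$ is a harmless addition the paper does not bother with.
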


Lemma~\ref{lemma:gaussian_mechanism} is straightforward with Theorem~\ref{theorem:gaussian}. 
When $\delta$ is given and fixed, the noise scale $\sigma$ has an inverse correlation with privacy loss $\epsilon$, implying that one can calculate the smallest $\epsilon$ differential privacy guarantee of the function based on the noise scale. 

{\bf Privacy parameters in baseline DPSGD implementation.\/} 
By Definition~\ref{def:sensitivity}, the sensitivity of a differentially private function is defined as the maximum amount that the function value varies when a single input entry is changed. Hence, the sensitivity of a differentially private function may vary for different input batches during different iterations of deep learning~\cite{wei2021gradient_tifs}. For federated learning, the sensitivity of the local model can be different for different local iterations at different clients and rounds.
Based on Theorem~\ref{theorem:gaussian} and Lemma~\ref{lemma:gaussian_mechanism}, the Gaussian noise for a differentially private function is calibrated with noise variance defined by noise scale $\sigma$ and sensitivity $S$ of the function. 
The baseline DPSGD implementation~\cite{abadi2016deep}, followed by most of the work~\cite{yu2019differentially,mcmahan2017learning,wei2020federated,wei2021gradient} suggests using a fixed clipping parameter $C$ to estimate sensitivity $S$, and a fixed noise scale $\sigma$.


\subsection{Properties of Differential Privacy}

Several important properties of differential privacy are essential in tracking privacy spending. They designate the privacy composition for a sequence of differential privacy mechanisms on the same dataset, for mechanisms that run in parallel over disjoint datasets, and for the post-processing of a differentially private mechanism.

\begin{theorem}
\textbf{Composition theorem~\cite{dwork2014algorithmic}}:  Let $\mathcal{M}_i: \mathcal{D}\rightarrow \mathcal{R}_i$ be a randomized function that is $(\epsilon_i,\delta_i)$-differentially private. If $\mathcal{M}$ is a sequence of consecutive invocations (executions) 
of ($\epsilon_i,\delta_i$)-differentially private algorithm $\mathcal{M}_i$, then $\mathcal{M}$ is ($\sum_i \epsilon_i, \sum_i \delta_i$)-differentially private.
\label{theorem:composition_sequential}
\end{theorem}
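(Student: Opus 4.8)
The plan is to induct on the number $k$ of composed mechanisms and thereby reduce the whole statement to the two-mechanism case: if $\mathcal{M}_1$ is $(\epsilon_1,\delta_1)$-differentially private and $\mathcal{M}_2$ is $(\epsilon_2,\delta_2)$-differentially private, then the mechanism $\mathcal{M}$ that on input $A$ releases the pair $(\mathcal{M}_1(A),\mathcal{M}_2(A))$ is $(\epsilon_1+\epsilon_2,\delta_1+\delta_2)$-differentially private. Applying this $k-1$ times collapses the whole tuple and adds up the parameters (forgetting coordinates is post-processing, which never hurts privacy). The same argument handles the adaptive reading of ``consecutive invocations'' in which $\mathcal{M}_2$ is selected as a function of the output of $\mathcal{M}_1$: one conditions on that output and uses that the conditional mechanism is still $(\epsilon_2,\delta_2)$-private in its last argument. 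Fix neighboring inputs $A,A'$ throughout and write $p_i,q_i$ for the output densities of $\mathcal{M}_i$ on $A,A'$; in the non-adaptive case the joint density of $\mathcal{M}$ on $A$ (resp. $A'$) factorizes as a product of the $p_i$'s (resp. $q_i$'s).

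I would first dispatch the pure case $\delta_1=\delta_2=0$, since it contains the only genuinely new idea. Here $(\epsilon_i,0)$-privacy is equivalent to the pointwise bound $p_i(x)\le e^{\epsilon_i}q_i(x)$ for almost every $x$ — equivalently, the privacy-loss variable $\log(p_i/q_i)$ is bounded in absolute value by $\epsilon_i$. Multiplying the two pointwise inequalities gives, for the product density, the bound $e^{\epsilon_1}\cdot e^{\epsilon_2}=e^{\epsilon_1+\epsilon_2}$ on the ratio (the privacy loss of $\mathcal{M}$ is the sum of the two losses), and integrating this pointwise bound over an arbitrary output set $S$ gives $\Pr[\mathcal{M}(A)\in S]\le e^{\epsilon_1+\epsilon_2}\Pr[\mathcal{M}(A')\in S]$, i.e. $(\epsilon_1+\epsilon_2,0)$-privacy.

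For $\delta_i>0$ the obstacle — and the part I expect to be the crux — is that $(\epsilon,\delta)$-privacy constrains the probabilities of output sets rather than the pointwise density ratio, so ``multiply the ratios'' no longer applies on the nose. A purely elementary slicing argument (bound $\Pr[\mathcal{M}_2(A)\in S_x]\le e^{\epsilon_2}\Pr[\mathcal{M}_2(A')\in S_x]+\delta_2$ for each slice $S_x=\{y:(x,y)\in S\}$, then push the resulting $[0,1]$-valued function of $x$ through the $(\epsilon_1,\delta_1)$-privacy of $\mathcal{M}_1$ via the layer-cake identity $\E[X]=\int_0^\infty\Pr[X>t]\,dt$) already gets one to $(\epsilon_1+\epsilon_2,\ e^{\epsilon_2}\delta_1+\delta_2)$-privacy, which differs from the claim only by a benign exponential factor on one of the $\delta$'s. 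To hit the advertised $\delta_1+\delta_2$ exactly, I would instead invoke the standard structural characterization of approximate DP: for each $\mathcal{M}_i$ and each neighboring pair there is a ``good event'' $\mathcal{E}_i$ in the output space with $\Pr[\mathcal{M}_i(A)\notin\mathcal{E}_i]\le\delta_i$ and $\Pr[\mathcal{M}_i(A')\notin\mathcal{E}_i]\le\delta_i$, such that the restricted bound $\Pr[\mathcal{M}_i(A)\in S\cap\mathcal{E}_i]\le e^{\epsilon_i}\Pr[\mathcal{M}_i(A')\in S]$ (and its symmetric counterpart) holds for all $S$. Proving this localization lemma is the real work and is genuinely subtle — the naive choice $\mathcal{E}_i=\{p_i\le e^{\epsilon_i}q_i\}$ need not carry the required mass — so I would cite its standard proof rather than reprove it.

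Granting the localization lemma, the two-mechanism bound follows from a union bound together with the pure-case computation run on the good part. Writing $\mathcal{E}=\mathcal{E}_1\times\mathcal{E}_2$, for any output set $S$ we have $\Pr[\mathcal{M}(A)\in S]\le\Pr[\mathcal{M}(A)\in S\cap\mathcal{E}]+\Pr[\mathcal{M}_1(A)\notin\mathcal{E}_1]+\Pr[\mathcal{M}_2(A)\notin\mathcal{E}_2]\le\Pr[\mathcal{M}(A)\in S\cap\mathcal{E}]+\delta_1+\delta_2$, while slicing on $\mathcal{E}$ and chaining the two per-coordinate restricted bounds (again via the layer-cake identity to handle the $[0,1]$-valued slice mass $x\mapsto\Pr[\mathcal{M}_2(A')\in S_x]$) yields $\Pr[\mathcal{M}(A)\in S\cap\mathcal{E}]\le e^{\epsilon_1+\epsilon_2}\Pr[\mathcal{M}(A')\in S]$. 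Adding these, and observing that everything is symmetric under swapping $A$ and $A'$, gives $(\epsilon_1+\epsilon_2,\delta_1+\delta_2)$-privacy for $\mathcal{M}$, and the induction on $k$ closes the argument. The single thing to stay careful about is this $\delta$-bookkeeping: the $\delta_i$ terms must be harvested through the union bound over the bad events, not carried through the multiplicative DP inequality, or one is left with the weaker exponentially-weighted $\delta$ bound noted above.
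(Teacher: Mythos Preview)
Your proof sketch is correct and in fact goes well beyond what the paper does: the paper does not prove this theorem at all. Theorem~\ref{theorem:composition_sequential} is stated as a cited background result from \cite{dwork2014algorithmic} and is used as a black box throughout the privacy analysis (e.g., in accumulating per-iteration and per-round privacy spending). There is no proof to compare against.

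That said, your outline is sound. The reduction to the two-mechanism case by induction, the pointwise-ratio argument for the pure case, and the good-event/localization approach for the approximate case are exactly the standard route in \cite{dwork2014algorithmic}. Your observation that the naive slicing argument yields only $e^{\epsilon_2}\delta_1+\delta_2$ rather than $\delta_1+\delta_2$, and that one must harvest the $\delta_i$'s via a union bound on the bad events rather than carry them through the multiplicative inequality, is precisely the right diagnosis of where the care is needed. If anything, you could tighten the presentation by noting that the localization lemma you invoke is itself proved in \cite{dwork2014algorithmic} (Lemma~3.17 there), so citing it is consistent with the level of rigor the paper adopts for this background material.
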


\begin{theorem}
\textbf{Parallel Composition~\cite{mcsherry2009privacy}}:  Let $\mathcal{M}_i: \mathcal{D}_i\rightarrow \mathcal{R}_i$ be a randomized function that is $(\epsilon_i,\delta_i)$-differentially private on a disjointed subset of the data.
If $\mathcal{M}$ consists of a set of ($\epsilon_i,\delta_i$)-differentially private algorithms that are invoked and executed in parallel, then $\mathcal{M}$ is ($\max_i \epsilon_i, \max_i \delta_i$)-differentially private.
\label{theorem:composition_parallel}
\end{theorem}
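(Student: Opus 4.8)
The plan is to exploit the disjointness of the data subsets $\mathcal{D}_i$ in order to localize any single-entry change to exactly one block. Fix two neighboring input sets $A,A' \subseteq \mathcal{D}$ with $\|A-A'\|_0 = 1$, and write $A_i = A \cap \mathcal{D}_i$ and $A_i' = A' \cap \mathcal{D}_i$ for the restrictions to the blocks of the partition. Because the $\mathcal{D}_i$ are pairwise disjoint, the single differing entry lies in exactly one block, say $\mathcal{D}_j$; hence $A_i = A_i'$ for every $i \neq j$, while $\|A_j - A_j'\|_0 = 1$. The composed mechanism returns the tuple $\mathcal{M}(A) = (\mathcal{M}_1(A_1),\dots,\mathcal{M}_n(A_n))$, and since the components are invoked in parallel on disjoint data they draw independent randomness, so the output law of $\mathcal{M}(A)$ is the product of the laws of the $\mathcal{M}_i(A_i)$.

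First I would verify the claim on product (rectangle) events $\mathcal{S} = \mathcal{S}_1 \times \cdots \times \mathcal{S}_n$, where $\Pr[\mathcal{M}(A)\in\mathcal{S}] = \prod_i \Pr[\mathcal{M}_i(A_i)\in\mathcal{S}_i]$. For $i \neq j$ each factor is unchanged when $A$ is replaced by $A'$ since $A_i = A_i'$; for the $j$-th factor, $(\epsilon_j,\delta_j)$-differential privacy of $\mathcal{M}_j$ gives $\Pr[\mathcal{M}_j(A_j)\in\mathcal{S}_j] \le e^{\epsilon_j}\Pr[\mathcal{M}_j(A_j')\in\mathcal{S}_j] + \delta_j$. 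Multiplying through by the remaining factors, each of which is at most $1$, yields $\Pr[\mathcal{M}(A)\in\mathcal{S}] \le e^{\epsilon_j}\Pr[\mathcal{M}(A')\in\mathcal{S}] + \delta_j \le e^{\max_i \epsilon_i}\Pr[\mathcal{M}(A')\in\mathcal{S}] + \max_i \delta_i$, the desired inequality.

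To pass from rectangles to an arbitrary measurable event $\mathcal{S}$ in the product range, I would slice on the coordinates other than $j$: with $\mathcal{S}_{y_{-j}} = \{\, y_j : (y_j,y_{-j}) \in \mathcal{S}\,\}$ and Fubini applied to the product law, $\Pr[\mathcal{M}(A)\in\mathcal{S}] = \int \Pr[\mathcal{M}_j(A_j)\in\mathcal{S}_{y_{-j}}]\, d\mathbb{P}_{-j}(y_{-j})$, where $\mathbb{P}_{-j}$ is the joint law of the coordinates $i\neq j$ and is identical under $A$ and $A'$. Bounding the integrand by $e^{\epsilon_j}\Pr[\mathcal{M}_j(A_j')\in\mathcal{S}_{y_{-j}}] + \delta_j$ and integrating back recovers $\Pr[\mathcal{M}(A)\in\mathcal{S}] \le e^{\epsilon_j}\Pr[\mathcal{M}(A')\in\mathcal{S}] + \delta_j$, and then $\epsilon_j \le \max_i \epsilon_i$, $\delta_j \le \max_i \delta_i$ finishes the argument.

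The conceptual crux is twofold and worth isolating: disjointness of the $\mathcal{D}_i$ is what forces a single-entry change to touch exactly one component mechanism, and parallel (independent) execution is what makes the joint output law factorize. Given those two facts the bound is immediate on rectangles; the one genuinely technical point is the extension to an arbitrary measurable $\mathcal{S}$, which I expect to be the main obstacle and which I would handle by the Fubini/slicing argument above rather than by a union-bound-style manipulation, since a general event need not respect the product structure. Everything else — identifying the block $j$ that contains the changed entry, and replacing $\epsilon_j,\delta_j$ by their maxima over $i$ — is routine bookkeeping.
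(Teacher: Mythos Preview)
Your argument is correct and is the standard proof of parallel composition: localize the single changed entry to one block $j$ via disjointness, use independence to factorize the output law, apply $(\epsilon_j,\delta_j)$-DP to the $j$-th factor, and extend from rectangles to arbitrary events by slicing/Fubini. The paper, however, does not prove this theorem at all; it is stated as a cited result from \cite{mcsherry2009privacy} and only paraphrased in prose afterward, so there is no ``paper's own proof'' to compare against. Your write-up would serve as a complete proof where the paper simply invokes the literature.
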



\begin{theorem}
\textbf{Post Processing~\cite{dwork2014algorithmic}}: Let $\mathcal{D}$ be the domain of possible input data and $\mathcal{R}$ be the range of all possible output. Let $\mathcal{M}: \mathcal{D}\rightarrow \mathcal{R}$ be a randomized function that is $(\epsilon,\delta)$-differentially private. Let $h:\mathcal{R}\rightarrow \mathcal{R'}$  be an arbitrarily randomized or deterministic mapping. Then $h \circ \mathcal{M}: \mathcal{D}\rightarrow \mathcal{R'}$ is $(\epsilon,\delta)$-differentially private.
\label{theorem:post_processing}
\end{theorem}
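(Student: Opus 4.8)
The plan is to show that applying the map $h$ to the output of $\mathcal{M}$ cannot amplify the distinguishing advantage between neighboring inputs, so the $(\epsilon,\delta)$ guarantee of $\mathcal{M}$ transfers intact to $h \circ \mathcal{M}$. The argument splits into two cases according to whether $h$ is deterministic or randomized, and the randomized case reduces to the deterministic one.

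First I would handle a deterministic $h$. Fix neighboring inputs $A, A' \subseteq \mathcal{D}$ with $\|A - A'\|_0 = 1$ and an arbitrary measurable output event $T \subseteq \mathcal{R'}$. The key observation is that the event $\{h(\mathcal{M}(A)) \in T\}$ coincides with the event $\{\mathcal{M}(A) \in S\}$, where $S = h^{-1}(T) = \{r \in \mathcal{R} : h(r) \in T\}$ is the preimage of $T$ under $h$. Since $S$ is itself a valid output event in $\mathcal{R}$, I can apply the definition of $(\epsilon,\delta)$-differential privacy for $\mathcal{M}$ (Definition~\ref{def:dp}) directly to $S$, obtaining $\Pr[\mathcal{M}(A) \in S] \le e^{\epsilon}\Pr[\mathcal{M}(A') \in S] + \delta$. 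Rewriting each side back in terms of $h \circ \mathcal{M}$ via the preimage identity then yields $\Pr[h(\mathcal{M}(A)) \in T] \le e^{\epsilon}\Pr[h(\mathcal{M}(A')) \in T] + \delta$, which is exactly the claimed guarantee for the composed mechanism.

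Next I would extend to a randomized $h$ by viewing it as a mixture of deterministic maps. The internal coin flips of $h$ are independent of $\mathcal{M}$, so I can condition on the realization $\omega$ of $h$'s randomness and regard $h = h_\omega$ as a deterministic function for each fixed $\omega$. Conditioning gives $\Pr[h(\mathcal{M}(A)) \in T] = \E_\omega \big[\Pr[h_\omega(\mathcal{M}(A)) \in T]\big]$, and the deterministic case bounds each integrand by $e^{\epsilon}\Pr[h_\omega(\mathcal{M}(A')) \in T] + \delta$. Taking the expectation over $\omega$ and using linearity of expectation preserves the inequality, recovering the bound for $h \circ \mathcal{M}$ and completing the proof.

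The main obstacle I anticipate is measure-theoretic rigor rather than conceptual difficulty. For the deterministic reduction I must ensure the preimage $S = h^{-1}(T)$ is a measurable event so that the guarantee of $\mathcal{M}$ genuinely applies, and for the randomized case I must justify interchanging the probability with the expectation over $h$'s internal randomness, which is an appeal to Fubini's theorem combined with the independence of the two sources of randomness. In a discrete setting both points are immediate; in the general case I would simply assume $h$ is measurable and invoke linearity of expectation to close the argument.
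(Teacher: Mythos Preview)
Your argument is correct and is the standard proof of the post-processing property. Note, however, that the paper does not actually supply a proof of this theorem: it is stated as a cited result from~\cite{dwork2014algorithmic} and used without further justification, so there is no paper-side proof to compare against. Your preimage argument for deterministic $h$ followed by averaging over the internal randomness for randomized $h$ is precisely the textbook derivation found in that reference.
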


Theorem~\ref{theorem:composition_sequential} states that a randomized function $\mathcal{M}$ that consists of a sequence of $n$ differentially private mechanisms is differentially private. 
Theorem~\ref{theorem:composition_parallel} states that one can view the parallel composition of a set of $n$ differentially private mechanisms executed in parallel as the case in which each differentially private mechanism is applied to one of the $n$ disjointed subsets of the input dataset concurrently. The overall privacy guarantee of this parallel composition will be defined by the maximum privacy loss among the $n$ individual privacy losses. 
Theorem~\ref{theorem:post_processing} states another important property of the differentially private algorithm: any post-processing applied to the result of a $(\epsilon,\delta)$-differentially private mechanism is also $(\epsilon,\delta)$-differential private with the same privacy guarantee. 
The above differential privacy definitions and theorems form the essential foundation for developing differentially private federated learning approaches. Besides composition, the data sampling method also contributes to the privacy spending accumulation via privacy amplification~\cite{balle2018privacy}.


\section{Gradient Leakage Threats}

\subsection{Threat Model}
\label{sec:threatmodel}
\noindent


Assumptions. {\em On the data side}, we assume that data at rest and data in network transit are encrypted and secure. The main attack surface is during data-in-use. We focus on gradient leakage induced threats to client training data in the presence of malicious or semi-curious adversary. Hence, we focus on unauthorized access to gradient data during local training on a client and the global SGD for aggregation of local model updates performed at the federated server. Hence, we assume that the attackers cannot gain access to the training data prior to feeding the decrypted training data to the DNN algorithm during local training.  

{\em On attacker side}, we assume that the adversary on a compromised client {\em only} performs gradient leakage attack, which seeks unauthorized access to gradients (training parameter updates) of participating clients at three possible attack surfaces: (i) at the federated server, (ii) at the client after local training prior to sending encrypted gradients to the federated server, and (iii) during local training prior to local SGD performed at each client. The {\bf type-0 attack} occurs at the federated server. We assume that the server is compromised and the adversary can access the shared per-client gradient prior to performing the global SGD at the federated server. However, the adversary cannot obtain the per-example gradient or the accumulated gradients during per iteration of the local training process prior to performing local SGD at each client. The {\bf type-1 attack} occurs at the client but the adversary on a client can only gain access to the per-client local gradient after the local training is completed and prior to the client encrypting and sending the local training gradient  to the federated server at each global round. Finally, the {\bf type-2 attack}  corresponds to the third type of attack surface where the adversary at client may gain access to the per-example gradient prior to performing local SGD at each iteration during local training. This type-2 attack can result in a high attack success rate by reconstructing high-quality sensitive training data.

 \begin{figure}[t]
\centerline{\includegraphics[scale=.60]{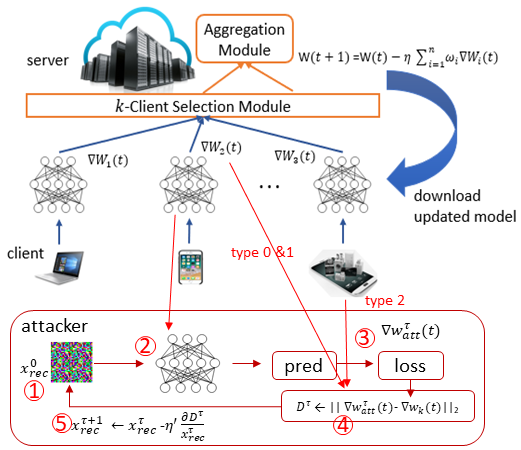}}
\vspace{-0.1cm}
\caption{\small Attack schema.}
\label{fig:attack_schema}
\vspace{-0.2cm}
\end{figure}

\subsection{Attack Procedure}

\textbf{Figure~\ref{fig:attack_schema}} gives a sketch of the gradient leakage attack algorithm, which configures and executes the reconstruction attack in five steps: 
(1) the attacker configures the initialization seed ($x^{0}_{rec}(t)$), a dummy data of the same resolution (or attribute structure for text) as the training data. \cite{wei2020framework} showed some significant impact of 
different initialization seeds on the attack success rate and attack cost ($\#$ attack iterations to succeed). (2) The dummy attack seed is fed into the client's local model. (3) The gradient of the dummy attack seed is obtained by backpropagation. (4) The gradient loss is computed using a vector distance loss function, e.g., $L_2$, between the gradient of the attack seed and the actual gradient from the client's local training. The choice of this reconstruction loss function is another tunable attack parameter. (5) The dummy attack seed is modified by the attack reconstruction learning algorithm. It aims to minimize the vector distance loss by a loss optimizer such that the gradients of the reconstructed seed $x^{i}_{rec}(t)$ at round $i$ will be closer to the actual gradient updates stolen from the client upon the completion (type 0 \& 1) or during the local training (type 2). This attack reconstruction iterates until it reaches the attack termination condition ($\tau$), typically defined by the $\#$rounds, e.g., 300 (also a configurable attack parameter). If the reconstruction loss is smaller than the specified distance threshold, then the attack is successful. \textbf{Figure~\ref{fig:attack_vis}} provides a visualization by attack examples of CPL~\cite{wei2020framework} on MNIST, CIFAR10, and LFW, respectively. The type-0 \& 1 gradient leakage attack is performed using the LFW dataset on the batched gradients with batch size 3. 
The details of these datasets as well as their setting are provided in Section~\ref{sec:experiments}.


     \begin{figure}[t]
\centerline{\includegraphics[scale=.40]{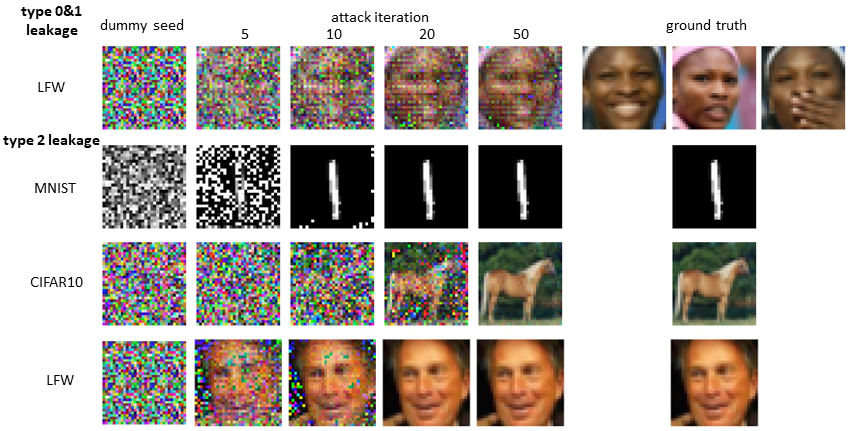}}
\vspace{-0.1cm}
\caption{\small Attack visualization.}
\label{fig:attack_vis}
\vspace{-0.2cm}
\end{figure}

\begin{table}[t]
\centering
\caption{\small Comparison of the representative gradient leakage attacks.}
\vspace{-0.1cm}
\scalebox{0.90}{
\small{
\begin{tabular}{|cc|c|c|c|}
\hline
\multicolumn{2}{|c|}{}                                    & MNIST &  CIFAR10 & LFW   \\ \hline
\multicolumn{1}{|c|}{\multirow{3}{*}{DLG~\cite{zhu2019deep}}}     & ASR      & 0.686         & 0.754   & 0.857 \\ \cline{2-5} 
\multicolumn{1}{|c|}{}                         & quality  & 0.145         & 0.109   & 0.089 \\ \cline{2-5} 
\multicolumn{1}{|c|}{}                         & reconstruction iteration & 18.4          & 114.5   & 69.2  \\ \hline
\multicolumn{1}{|c|}{\multirow{3}{*}{GradInv~\cite{geiping2020inverting}}} & ASR      & 1               & 0.985   & 0.994 \\ \cline{2-5} 
\multicolumn{1}{|c|}{}                         & quality  & 0.122         & 0.095   & 0.071 \\ \cline{2-5} 
\multicolumn{1}{|c|}{}                         & reconstruction iteration & 846          & 2135    & 1826  \\ \hline
\multicolumn{1}{|c|}{\multirow{3}{*}{CPL~\cite{wei2020framework}}}     & ASR      & 1                 & 0.973   & 1     \\ \cline{2-5} 
\multicolumn{1}{|c|}{}                         & quality  & 0.120         & 0.094   & 0.067 \\ \cline{2-5} 
\multicolumn{1}{|c|}{}                         & reconstruction iteration & 11.5        & 28.3    & 25    \\ \hline
\end{tabular}
}}
\label{table:attack_compare}
 \vspace{-0.4cm}
\end{table} 

\textbf{Table~\ref{table:attack_compare}} compares the type-2 leakage under three representative gradient leakage attacks, in terms of $l_2$ leakage the attack success rate (ASR), reconstruction quality measured by RMSE, 
and reconstruction iterations. 
 For the rest of the paper, we adopt the
most effective and efficient attack with patterned-initialization at the first iteration (epoch) of federated learning by following~\cite{wei2020framework}.  



\section{Analysis on Gradient Transformation}
\label{sec:existing_limitation}

In this section, we examine gradient compression and gradient perturbation, which are methods in the literature applicable to gradient transformation in federated learning. We analyze the resilience each may provide against gradient leakage attacks. 

\subsection{Gradient Compression}
Gradient pruning is a common technique for DNN model compression. In each round $t$ of federated learning, each participating client sends a full vector of local training parameter update to the federated server. This step can be the communication bottleneck for large models on complex data. Several communication-efficient federated learning protocols have been proposed by employing structured or sketched updates~\cite{mcmahan2017communication,konevcny2016federated}. 
The former directly learns an update from a pre-specified structure, such as a low-rank matrix and random masks. The latter compresses the learned full vector of model parameter update to ensure a high compression ratio with a low-value loss before sending it to the server. We below describe two simple pruning approaches for utility-aware gradient compression. 

 \begin{figure*}
\centerline{\includegraphics[scale=.42]{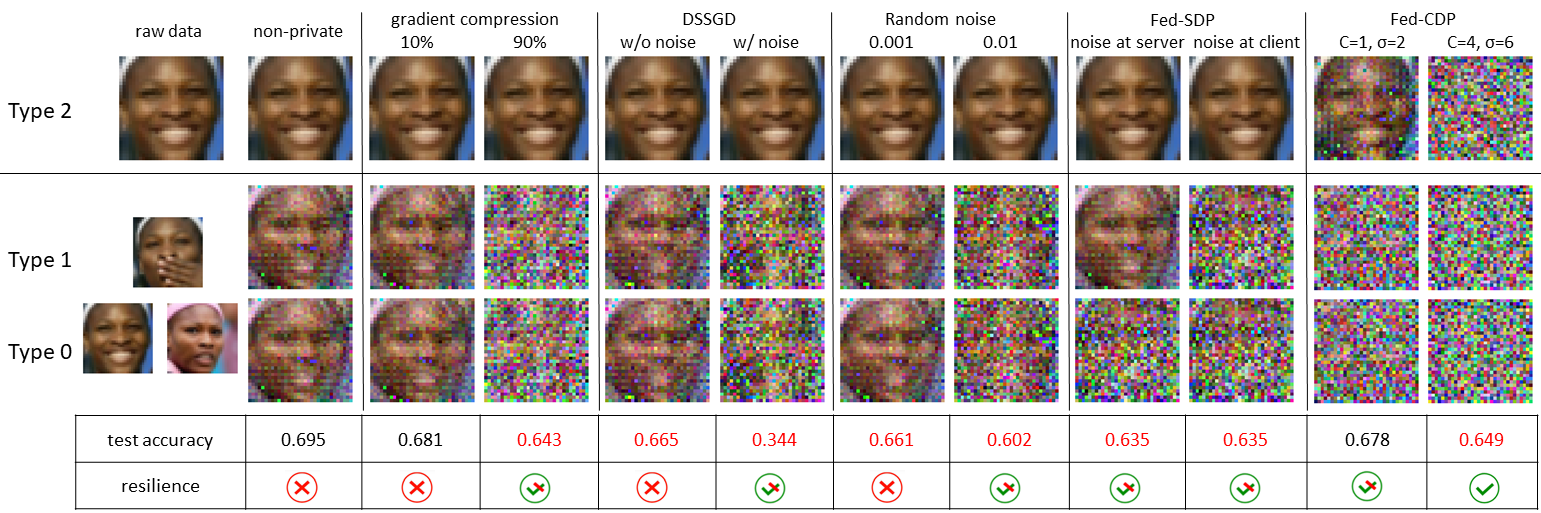}}
\vspace{-0.1cm}
\caption{\small Limitations of existing gradient transformation approaches on gradient privacy leakage.}
\label{table:limitation}
\vspace{-0.4cm}
\end{figure*}

\textbf{Threshold-based gradient pruning.} 
Threshold-based gradient pruning~\cite{lin2018deep} sends only the important gradients to the federated server at each round, i.e., the gradient coordinates whose magnitude is larger than a threshold. The gradients to be shared are first being sorted, and then a certain percentage of very small but high precision gradients are replaced with zero, based on a default pruning threshold $\mu\%$. The level of resilience under gradient pruning depends on both the dataset and the pruning threshold $\mu\%$.
A study in~\cite{wei2020framework} reveals that the resilience of gradient pruning only takes effect once more than 40\% of gradients are pruned for MNIST, 50\% for CIFAR10, and 80\% for LFW. This implies that (i) the gradients transformed by low-rank pruning may still intrude on client privacy under gradient leakage attacks if the gradient compression ratio is not sufficiently large, and (ii) it is hard to set a universal compression ratio that works for all datasets.


\textbf{Random pruning with Distributed Selected SGD.} 
Alternatively, a client may randomly select a subset of gradient coordinates whose values are above a given threshold and share this randomized subset with the federated server (DSSGD~\cite{shokri2015privacy}). However,
DSSGD is at most on par with the value-based pruning on model accuracy and leakage resilience. 
To mask the actual value of the gradient, another option of DSSGD is to add differential privacy noise to the selected subset of gradients before sharing them with the server. 
However, using the differential privacy controlled noise may result in a large accuracy deficit as reported in~\cite{shokri2015privacy}. Moreover, due to the coordinate-level gradient noise, the globally trained model does not offer instance-level
differential privacy protection despite the differential
privacy noise added at each round.

\subsection{Gradient Perturbation}


\textbf{Random noise addition.} It is clear that a larger injected noise will alter the raw gradients more but may also hurt the model accuracy of federated learning. The open challenge for random noise injection is how much noise is sufficient for leakage resilience while maintaining good model accuracy. 
Studies in~\cite{wei2020framework,zhu2019deep} have shown that when the noise variance is high enough, e.g., set to 0.01, for gradient perturbation, the gradient leakage induced violation of client privacy can be prevented at the cost of a significant accuracy loss of 10\% for LFW and 30\% for CIFAR100. Such an unstable effect, compounded with the potential accuracy drop, makes the additive random noise difficult to scale.

{\bf Client-level differential privacy noise.}
Most existing approaches to differentially private federated learning~\cite{mcmahan2017learning,geyer2017differentially} only make client-level protection by making server-side global-level aggregation differentially private. This is done by noise injection on the client update through the iterative $T$ rounds of training, regardless of whether the differential privacy noise is injected at the federated server before performing global SGD~\cite{mcmahan2017learning} or at each of the $K_t$ clients before encrypting the gradients and sending them to the federated server~\cite{geyer2017differentially}. Therefore, we coin this approach as Fed-SDP. By the definition and composition theorems of differential privacy, the trained global model in Fed-SDP only provides differential privacy at the client level, which ensures that an adversary cannot tell whether a local training parameter update from a single client was used, even if the adversary can observe the output of the global model $\mathcal{F}$. However, the per-example gradients used in the local training are not perturbed, and the local SGD is not differentially private. Hence, Fed-SDP cannot provide differential privacy guarantee at the instance level. 

Consider the three types of gradient leakage threats and Fed-SDP performs clipping and noise injection prior to performing aggregation using server-side SGD, if the differential privacy noise is added at the federated server like in~\cite{mcmahan2017learning}, then it is only effective against the type 0 leakage but still vulnerable to type 1 and type 2 gradient leakage attacks, even with the client-level differential privacy guarantee. Suppose Fed-SDP performs noise injection at each of the $K_t$ contributing clients for each round $t$ like in~\cite{geyer2017differentially}. In that case, it can combat both type 0 and type 1 leakages but remain vulnerable to the type 2 gradient leakage attacks. This is because type 2 leakage occurs prior to performing local SGD per iteration during local training. Similarly for both gradient pruning methods and the additive random noise solution, as they perform gradient transformation after the client's local training, the client's local training is unprotected and is vulnerable to type-2 leakage.

{\bf Instance-level differential privacy noise.}
By comparison, \cite{wei2021gradient} introduces per-example noise to federated learning as Fed-CDP and ensures that client-side local SGD during the client's local training is differentially private. Therefore, Fed-CDP provides per-example differential privacy, such that even if an adversary can observe the output of the global model trained by Fed-CDP, he cannot
tell whether the user information defined by a single
example was used in training. Due to Composition theorems and Post Processing, the per-client local update is sanitized with the compound noise accumulated from multiple local iterations. Thus, the subsequent server aggregation is differentially private. Accordingly, Fed-CDP offers both instance-level differential privacy and client-level differential privacy.

Both Fed-SDP~\cite{mcmahan2017learning,geyer2017differentially,wei2020federated} and Fed-CDP~\cite{wei2021gradient} commonly use a pre-defined fixed clipping bound $C$ and the constant clipping method to cap the gradients prior to perturbing the gradients with differential privacy controlled noise, and all follow the DPSGD implementation in~\cite{abadi2016deep}, which utilizes the fixed clipping parameter $C$ as an estimate of the sensitivity, resulting in using a fixed sensitivity in all rounds of federated learning. By Lemma~\ref{lemma:gaussian_mechanism}, the use of fixed $S$ and $\sigma$ will result in constant noise variance, and thus a fixed amount of noise is added for gradient perturbation each time over all $T$ rounds of federated learning.  {\bf Figure~\ref{table:limitation}} provide a visual illustration of the limitation of these methods on test accuracy and leakage resilience under three types of gradient leakage attacks for LFW. Even for Fed-CDP, we find that accuracy-oriented privacy parameter search cannot always guarantee gradient leakage resilience when under fixed privacy parameters (second column to the right in Figure~\ref{table:limitation}). The resilience-accuracy co-design of Fed-CDP could be at the cost of accuracy (last column in Figure~\ref{table:limitation}).
Accordingly, we design Fed-$\alpha$CDP, a new approach to federated learning with differential privacy with three enhancements on how differential privacy noise is computed, injected, and controlled. Our approach aims to provide high resilience against all three types of gradient leakages, strong differential privacy guarantee, and high model accuracy.



 

\section{Fed-$\alpha$CDP: a three-in-one solution}
\label{sec:adaptive}

In this section, we describe Fed-$\alpha$CDP, including the design considerations and the three key enhancements that distinguish Fed-$\alpha$CDP approach to the conventional differentially private federated learning: (i) enforcing client-side instance-level differential privacy, (ii) revisiting and improving sensitivity parameter $S$ with tighter estimation, and (iii) revisiting fixed noise scale parameter $\sigma$ with dynamic decaying strategies.


\subsection{Fed-$\alpha$CDP: Per-Example Perturbation at Client}

To ensure that the global model $\mathcal{F}$ jointly trained over $T$ rounds of federated learning satisfies $(\epsilon,\delta)$ differential privacy, we need to ensure the server-side SGD algorithm, denoted by $F_t$, to satisfy $(\epsilon_t,\delta_t)$-differential privacy, where $\epsilon=\sum_{t=1}^T \epsilon_{t}$ and $\delta=\sum_{t=1}^T \delta_{t}$. There are two ways to make $F_t$ differentially private. 
The first approach is to add $\epsilon_t$-controlled noise to the local gradient update obtained after the completion of local training at each client $i$ for round $t$ and all $K_t$ clients ($1\leq i\leq K_t$) as in Fed-SDP.
The second approach is to add differential privacy noise to the per-example gradient prior to performing the local SGD over each batch of local training data $D_i$. Given that the local training at client $i$ is performed over $L$ iterations ($1\leq l\leq L$) on $D_i$ for round $t$ ($1\leq i\leq K_t$, $1\leq t\leq T$), we need to ensure the local SGD algorithm performed at each iteration $l$, denoted by $g_{til}$, to satisfy $(\epsilon_{tl},\delta_{tl})$-differential privacy. By parallel composition over $K_t$ clients, we have $\epsilon_t =\max_{i \in K_t} \epsilon_{ti}$ and $\delta_t = \max_{i\in K_t} \delta_{ti}$, and by sequential composition over the $L$ iterations of local training on client $i$ for round $t$, we have $\epsilon_{ti}=\sum_{l=1}^L \epsilon_{til}$ and $\delta_{ti}=\sum_{l=1}^T \delta_{til}$ by sequential composition. 


{\bf Sanitizing per-example gradients. \/}  Based on the above analysis, Fed-$\alpha$CDP injects differential privacy controlled random noise prior to performing local SGD for every local iteration $l$ ($1\leq l \leq L$) at each contributing client $i$ ($1\leq i\leq K_t$) in each round $t$ ($1\leq t\leq T$). This ensures that the local SGD algorithm $g_{til}$ satisfies $(\epsilon_{til}, \delta_{til})$-differential privacy. By Theorem~\ref{theorem:gaussian} and Lemma~\ref{lemma:gaussian_mechanism}, the Gaussian noise $\mathcal{N}(0,\sigma^2S^2)$ will be added to the per-example gradient in each local iteration prior to performing local SGD algorithm ($g_{til}$) over the batch $B$ of training examples. 
For a DNN of $M$ layers, we add this noise to per-example gradient layer by layer after performing gradient clipping with a pre-defined clipping bound $C$. 


{\bf Constant clipping method with clipping bound $C$.\/} First, $g_{til}$ draws a batch of $B$ samples from $D_i$, and for each example $j$ in the batch, $g_{til}$ computes the layer-wise per-example gradient vector, $\nabla W_{ij}(t)_{lm}$ ($1\leq m\leq M$), by minimizing the empirical loss function. Next, $g_{til}$ computes the $l_2$ norm of per-example gradient for each of the $M$ layers and performs clipping on the per-example gradients. If the $l_2$ norm of per-example gradient for layer $m$ is no larger than the fixed clipping bound, i.e., $||\nabla W_{ij}(t)_{lm}||_2 \leq C$, then the gradient value $\nabla W_{ij}(t)_{lm}$ is preserved. Otherwise $||\nabla W_{ij}(t)_{lm}||_2>C$, we clip the per-example gradient vector $\nabla W_{ij}(t)_{lm}$ for each layer $m$ so that its $l_2$ norm is reduced to $C$. We denote the clipped per-example gradient as $\overline \nabla W_{ij}(t)_{lm}$ for layer $m$ of local training example $j$ in local iteration $l$.

\subsection{Fed-$\alpha$CDP: Revisiting Sensitivity $S$}


Due to the nature of gradient descent,  the trend of $l_2$ norm of gradients as the training progresses would decrease. This implies that using the fixed clipping bound $C$ to define sensitivity $S$ can be a very loose approximation of the actual $l_2$ sensitivity by Definition~\ref{def:sensitivity}, especially at the later stage of training near convergence. With a fixed sensitivity $S$ and noise scale $\sigma$, the Gaussian noise with variance $\mathcal{N}(0,\sigma^2S^2)$ will result in injecting a fixed amount of differential privacy noise over the course of iterative federated learning. In most rounds, especially later rounds of training, such excessive noise injection is unnecessary and can incur adverse effects on both accuracy and convergence. Meanwhile, regarding gradient leakage attacks, per-client local gradient update at early rounds is much more informative compared to that of later rounds~\cite{wei2020framework}. Therefore, it will be more effective if a larger noise at early rounds and a smaller noise at later rounds of federated learning is injected, implying smaller privacy loss ($\epsilon$) allocates to early rounds and larger privacy spending for later rounds instead of uniform and fixed privacy spending allocation over $T$ rounds. Motivated by these limitations of fixed clipping based sensitivity, we hereby revisit the sensitivity definition and adapt it to federated learning so that the sensitivity strictly aligns to the gradient $l_2$ norm and keep track of the sensitivity of the local training model. 
Specifically, we promote to use the max $l_2$ norm of the per-example gradient in a batch as the sensitivity in Fed-$\alpha$CDP for two reasons. First, the sensitivity of the local SGD function $g_{til}$ differs for different local iterations at the same client $i$ and for the same round $t$ because different iterations correspond to different batches of training examples randomly sampled from the local training dataset $D_i$. Second, the $l_2$-max computed after clipping reflects more accurately the actual sensitivity of $g_{til}$ by following the sensitivity Definition~\ref{def:sensitivity}. 
Consider two scenarios: (i) When the $l_2$ norm of all per-example gradients in a batch is smaller than the pre-defined clipping bound $C$, then the clipping bound $C$ is undesirably a loose estimation of the sensitivity of $g_{til}$ under any given $l$, $i$, and $t$. For fixed $l$, fixed $i$, and fixed $t$, the max $l_2$ norm among the per-example gradients over the entire batch for iteration $l$ is, in fact, a tight estimation of sensitivity for noise injection.  If we instead define the sensitivity of $g_{til}$ by the max $l_2$ norm among these per-example gradients in the batch, we will correct the problems in the above scenario. (ii) When any of the per-example gradients in a batch is larger than the clipping bound, the sensitivity of $g_{til}$ is set to $C$.  In summary, the $l_2$-max sensitivity will take whichever is smaller of the max $l_2$ norm and the clipping bound $C$.  
In Fed-$\alpha$CDP, we can perform clipping layer by layer and example by example, but we need to finish computing the $l_2$ norm of all $M$ layers for all examples in the batch of iteration $l$ in order to obtain the max $l_2$ norm as the tight estimation of the actual sensitivity of $g_{til}$.  
This $l_2$-max sensitivity definition is dependent on the local training function $g_{til}$, which tends to vary for different $l$ ($1\leq l\leq L$), different client $i$ ($1\leq i\leq K_t$), and different round $t$ ($1\leq t\leq T$). Even with given client $i$ and given round $t$, the randomly sampled batch over $D_i$ will be different for different iteration $l$. Furthermore, the function $g_{til}$ will produce different outputs for different clients at different rounds. Hence, this $l_2$-max sensitivity is adaptive with respect to every local iteration, every client, and every round.

{\bf Gradient perturbation with $l_2$-max sensitivity.\/}
After clipping, $g_{til}$ will inject $(\epsilon_{til}, \delta_{til})$-differential privacy controlled Gaussian noise $\mathcal{N}(0,\sigma^2S^2)$ to the layer-wise clipped per-example gradients in the batch: $\widetilde \nabla W_{i}(t)_{l} = \overline \nabla W_{i}(t)_{l} + \mathcal{N}(0,\sigma^2S^2)$, before performing batch averaging over noisy gradients: $\widetilde \nabla W_{i}(t)_{l}=\frac{1}{B}\sum \nolimits_j^B \widetilde \nabla W_{ij}(t)_{l}$. Then the local training function $g_{til}$ performs the gradient descent for iteration $l$: $W_i(t)_{l}=W_i(t)_{l-1}-\eta \widetilde \nabla {W_i(t)_{l-1}}$. The same process of gradient sanitization repeats for each iteration $l$ until all $L$ local iterations are completed. Now the differentially private function $f_{ti}$ is generated by a sequential composition of $L$ invocations of $g_{til}$ ($1 \leq l\leq L$), which produces the local model update to be shared by client $i$ to the federated server for round $t$. 

\subsection{Fed-$\alpha$CDP[$\sigma$]: Dynamic Decaying Noise Scale}
\label{sec:adaptiveSigma}

With the dynamic $l_2$-max sensitivity, Fed-$\alpha$CDP will inject a larger noise at early rounds and a smaller noise at later rounds because the descending trend of $l_2$-max sensitivity results in the declining Gaussian variance as the training progresses for a fixed noise scale $\sigma$. Based on Theorem~\ref{theorem:gaussian} and Lemma~\ref{lemma:gaussian_mechanism}, we can facilitate the decreasing trend for the Gaussian noise variance with a dynamically decaying noise scale instead of a pre-defined fixed $\sigma$. This can be implemented using a smooth decay function over the number of rounds in federated learning. We use Fed-$\alpha$CDP[$\sigma$] when incorporating a dynamic noise scale.


\textbf{Noise scale decay functions.} 
Inspired by dynamic learning rate polices~\cite{wu2019demystifying}, we consider four adaptive policies for implementing adaptive $\sigma$ decay with a starting $\sigma_0$. Each of the policies will progressively decrease the noise scale $\sigma$ as the number of rounds for federated learning increases. 

\textit{Linear decay}: The $\sigma$ decays linearly, as $\sigma_t=\sigma_0(1-\gamma_1 t)$ where $\gamma_1>0$ is the smooth controlling term for $\sigma$ at round $t$.

\textit{Staircase decay}: The $\sigma$ decays with a step function: $\sigma_t=\sigma_0(1-\gamma_2 \lfloor t/\Gamma \rfloor )$ where $\Gamma$ is the step size, or the changing interval for $\sigma_t$ allocation and $\gamma_2$ controls the magnitude of staircase.

\textit{Exponential decay}: The $\sigma$ decays by an exponential function: $\sigma_t=\sigma_0 e^{-\gamma_3 t}$ where $\gamma_3$ is the exponential controlling term.

\textit{Cyclic decay}: The $\sigma$ decay follows a linear cyclic Cosine annealing policy inspired by \cite{huang2017snapshot}, formulated as $
{\sigma_{t}} = \frac{{{\sigma _{0}}}}{2}\left( {\cos \left( {\frac{{\pi \bmod (t-1,\lceil T/\gamma_4 \rceil )}}{\lceil T/\gamma_4 \rceil} } \right) + 1} \right)$ where $\gamma_4$ denotes the number of cycles and the $\sigma_{t+mod(t-1,\lceil T/\gamma_4 \rceil )) }$ will restart from $\sigma_0$ at the beginning of every $\lceil T/\gamma_4 \rceil$ rounds. 

With a fixed $\sigma$ and fixed $S$, $\epsilon$ is typically distributed uniformly over the $T$ rounds. Under the same privacy budget and the fixed noise variance, the same amount of noise is injected each time regardless of the trend of the $l_2$ norm of the gradients. 
In comparison, when using the dynamically decaying $\sigma$, large differential privacy controlled noise will be injected at the early stage of training. With the noise scale gradually decaying,  the amount of differential privacy noise injected will be steadily reduced as the number of training rounds increases, resulting in smaller noise in the later rounds of federated learning. By Lemma~\ref{lemma:gaussian_mechanism}, $\sigma$ is anti-correlated to $\epsilon$. The dynamic decay of $\sigma$ will result in non-uniform distribution of $\epsilon$ over the $T$ training rounds: with small $\epsilon$ indicating small privacy loss by enforcing a larger noise scale. The smaller privacy spending will result in injecting larger noise in early rounds to protect informative gradients against leakages. Then by employing a smaller noise scale and accordingly smaller noise variance and increased $\epsilon$ spending in the later stage of federated learning, it results in faster convergence and higher accuracy performance under a given privacy budget.


While we want to construct dynamic differential privacy noise, determining $\sigma_t$ will need to take the following three factors into consideration: (1) the starting $\sigma_0$ need to be large enough to prevent gradient leakages. Note that general accuracy-driven privacy parameter search cannot always guarantee gradient leakage resilience. Therefore, we select the privacy parameter settings proven empirically to be resilient~\cite{wei2021gradient}: $\sigma_0 \geq \frac{C*\sigma}{S_{dyn}}$ for the initial setting. (2) the ending $\sigma_T$ cannot be too small as otherwise the $\epsilon$ privacy spending would explode, resulting in a poor DP protection.
(3) the amount of noise injected is yet not too much to affect the desired accuracy performance of the global model.

\begin{algorithm}[t]
\footnotesize
\caption{\footnotesize Fed-$\alpha$CDP[$\sigma$]}\label{client_dpsgd_12252020-l2max}
\KwIn{\# total clients $N$, \# clients per round $K_t$, maximum global round $T$, noise scale \textcolor{blue}{$\sigma_t$}, clipping bound $C$.}
\textbf{Server initialization and broadcast:} global model $W(0)$, local batch size $B$, \# local iteration $L$, local learning rate $\eta$.  \\ 
\For{round $t$ in $\{0,1,2\dots, T-1\}$}{
 \For{client $i$ in $\{1,2,\dots, K_t\}$}{
\nonl  \textbf{// download global model $W(t)$.}
	\vspace{0.2cm}
    
    \textbf{client $i$ do:   // start $f_{ti}$ on $D_i$.} \\
    $W_i(t)_0 \leftarrow W(t)$ \\
    \For{local iteration $l$ in $\{0,1,2...L-1\}$}{
    \nonl  \textbf{// batch processing, sampled over $D_i$ , start $g_{til}$.} \\
          \For{instance $j$ in $\{1,...B\}$ }{ 
         \For{layer $m$ in $\{1,...M\}$ }{ 
      \nonl  \textbf{// compute per-example gradient $\nabla W_{ij}(t)_l$ for layer $m$ of sample $j$ in the batch at local iteration $l$ from client $i$.} \\
        $\nabla W_{ij}(t)_l\leftarrow \{\nabla W_{ij}(t)_{lm}\}$ \\
      \nonl    \textbf{// compute $l_2$ norm for layer $m$ of sample $j$ at local iteration $l$.} \\
        $l2_{ij}(t)_{lm} = ||\nabla  W_{ij}(t)_{lm}||_2$ \\
      \nonl   \textbf{// clip per-example gradients by coordinate for layer $m$ of sample $j$ at local iteration $l$.}  \\
        $\overline \nabla W_{ij}(t)_{lm} \leftarrow \nabla W_{ij}(t)_{lm} * \min \{1, \frac{C}{l2_{ij}(t)_{lm}}\}$  \label{algo:clippingline} 
         } 
      \nonl   \textbf{// obtain the clipped per-example gradients for sample $j$ at local iteration $l$.}  \\
         $\overline \nabla W_{ij}(t)_l \leftarrow \{\overline \nabla W_{ij}(t)_{lm}\} , m=1,\dots, M$ \\
    }
\textbf{// compute batch gradients on $M$ layers for iteration $l$:} \\
         $ \overline \nabla W_i(t)_{l} \leftarrow \frac{1}{B}\sum \nolimits_{j=1}^B \overline \nabla W_{ij}(t)_l$ \\  
              \nonl        \textbf{// compute the max of $l_2$ norm over $M$ layers on the gradients of batch instance \{1,...B\} for local iteration $l$, assign the sensitivity $S$. } \\
        \textcolor{blue}{$S \leftarrow \max_{i,m} ||\overline  \nabla  W_{i}(t)_{lm}||_2, i=1,...,B, m=1,...,M$} \\ 
\nonl   \textbf{// compute sanitized batch gradients.} \\
         $  \widetilde \nabla W_i(t)_{l} \leftarrow \overline \nabla W_{i}(t)_l + \mathcal{N}(0, \textcolor{blue}{\sigma_t^2  S^2}))$ \\  
  \nonl \textbf{// local gradient descent.} \\
     $W_i(t)_{l+1} \leftarrow W_i(t)_l - \eta_i \widetilde \nabla W_i(t)_l$  \\
        \nonl \textbf{// end $g_{til}$, and start next iteration.} \\
     } 
  \nonl  \textbf{// send out local updates.} \\
 $\Delta W_i(t) \leftarrow W_i(t)_{L} - W(t)$ // with $M$ layers \\ 
         \nonl \textbf{// end $f_{ti}$ for local parameter update.} \\
}  

 \vspace{0.2cm}
 \textbf{server do} \\
  \nonl \textbf{// collect local updates from $K_t$ clients.} \\
  $\Delta W_{i}(t), i=1,...K_t$ \\
  \nonl \textbf{// aggregation.}\\
 $W(t + 1) \leftarrow  W(t) + \frac{1}{K_t}\sum\nolimits_{i = 1}^{{K_t}} \Delta W_{i}(t) $\\
  \nonl  \textbf{// start next round of learning until reaching $T$.} \\
}
\textbf{Output:} global model $W(T)$
\end{algorithm}



\subsection{Fed-$\alpha$CDP: the General Algorithm}


\textbf{Algorithm~\ref{client_dpsgd_12252020-l2max}} provides the pseudo-code for the general Fed-$\alpha$CDP setup. As the sensitivity and noise scale jointly determine the noise variance,
there are several options for the choice of these parameters: fixed clipping based sensitivity $S=C$ or $l_2$-max sensitivity, and fixed noise scale $\sigma_t=\sigma_0$ or decaying noise scale. 
The Gaussian noise $\mathcal{N}(0,\sigma_t^2S^2)$ injected to per-example gradient is most adaptive to the training process under both the decaying $\sigma_t$ and the $l_2$-max sensitivity as the noise variance closely track the trend of the gradients throughout the learning process. It offers a larger noise at the early stage of training for gradient leakage resilience and a smaller noise at the latter stage for high accuracy.

\section{Privacy Analysis}
\label{sec:formal}

In this section, we demonstrate how Fed-$\alpha$CDP satisfies differential privacy. We first prove that the per-example gradient noise injection is differentially private. Then we show that the trained global model is differentially private due to the composition theorems (Theorem~\ref{theorem:composition_sequential} and~\ref{theorem:composition_parallel}) and Post-Processing (Theorem~\ref{theorem:post_processing}). In addition, we demonstrate that Fed-$\alpha$CDP with per-example gradient perturbation guarantees both per-example and per-client differential privacy. At last, we introduce the experimental privacy analysis approach employed to evaluate Fed-$\alpha$CDP.

To prove that the per-example gradient noise injection step is differentially private, we model the Gaussian noise density function as $\frac{1}{\zeta} e^{-\beta||\mu||}$ at the local level. Inspired by~\cite{chaudhuri2011differentially}, Theorem~\ref{theorem:adaptive} ensures the differential privacy guarantee of $l_2$-max sensitivity under certain forms of convexity.

\begin{theorem}
For each iteration with batch size $B$, adding Gaussian noise with density function $\frac{1}{\zeta} e^{-\beta||\mu||}$ to the training function with loss function $h(x,w(t))$ and $|\nabla h(x,w(t))|<S$ for sensitivity of the model at iteration $t$ ensures $\epsilon$ differential privacy of the training function if $\beta=\frac{B\epsilon}{2S}$.
\label{theorem:adaptive}
\end{theorem}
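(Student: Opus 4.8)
The plan is to read the noise-addition step inside $g_{til}$ as an \emph{output-perturbation} mechanism and to bound its privacy loss by a direct density-ratio computation, in the spirit of~\cite{chaudhuri2011differentially}. First I would fix the mechanism under scrutiny: on a batch $B$ (playing the role of the input set of Definition~\ref{def:dp}), the per-iteration training function computes the clipped per-example gradients $v_1,\dots,v_B$ with $\|v_j\|\le S$ for every $j$ --- this is exactly the content of the hypothesis $|\nabla h|<S$ together with the clipping in line~\ref{algo:clippingline} --- forms the batch average $\bar g(B)=\frac1B\sum_{j=1}^{B}v_j$, and releases $\mathcal{M}(B)=\bar g(B)+\mu$ with $\mu$ drawn from the density $p(\mu)=\tfrac1\zeta e^{-\beta\|\mu\|}$. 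Since the ensuing descent step $w(t)\mapsto w(t)-\eta\,\mathcal{M}(B)$ is a deterministic function of $\mathcal{M}(B)$ and the public iterate $w(t)$, Post-Processing (Theorem~\ref{theorem:post_processing}) reduces the claim to showing that $\mathcal{M}$ itself is $\epsilon$-differentially private.

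Second, because the noise is added, the output density is just a translate of the noise density, $p_{\mathcal{M}(B)}(w)=\tfrac1\zeta e^{-\beta\|w-\bar g(B)\|}$, and the normalizing constant $\zeta$ depends only on $\beta$ and the ambient dimension --- not on the data --- so it cancels in every ratio. This is where output perturbation is cleaner than objective perturbation: perturbing the gradient rather than the loss introduces no change-of-variables Jacobian, hence no need to bound Hessian eigenvalues of $h$; the convexity-type hypothesis alluded to around the statement is needed only for the objective-perturbation reading, not for the argument here.

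Third, I would bound the $\ell_2$-sensitivity of the statistic $\bar g$. For neighbouring batches $B,B'$ differing in a single example, only one term of the average changes, so $\|\bar g(B)-\bar g(B')\|=\tfrac1B\|v_j-v_j'\|\le\tfrac1B(\|v_j\|+\|v_j'\|)\le\tfrac{2S}{B}$. Combining the translated density with the reverse triangle inequality, for every output $w$,
\[
\ln\frac{p_{\mathcal{M}(B)}(w)}{p_{\mathcal{M}(B')}(w)}=\beta\bigl(\|w-\bar g(B')\|-\|w-\bar g(B)\|\bigr)\le\beta\|\bar g(B)-\bar g(B')\|\le\frac{2\beta S}{B}=\epsilon,
\]
where the final equality is the choice $\beta=\tfrac{B\epsilon}{2S}$. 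Integrating $p_{\mathcal{M}(B)}$ over an arbitrary measurable output region $\mathcal{R}$ then gives $\Pr[\mathcal{M}(B)\in\mathcal{R}]\le e^{\epsilon}\Pr[\mathcal{M}(B')\in\mathcal{R}]$, i.e.\ $(\epsilon,0)$-differential privacy of the per-iteration training function; aggregation over the $L$ local iterations, $K_t$ clients and $T$ rounds is then handled by the composition theorems (Theorems~\ref{theorem:composition_sequential}--\ref{theorem:composition_parallel}) as already sketched in Section~\ref{sec:adaptive}. The main obstacle is conceptual rather than computational: one has to be careful that the released quantity really is an additive perturbation of a statistic whose $\ell_2$-sensitivity is $2S/B$ --- so that no Jacobian term enters --- and to note that $\tfrac1\zeta e^{-\beta\|\mu\|}$ is a multivariate-Laplace-type density rather than a literal Gaussian, so that the pure $\epsilon$-DP conclusion is genuinely a feature of this mechanism and would weaken to an $(\epsilon,\delta)$ guarantee only if one instead substituted the Gaussian law of Theorem~\ref{theorem:gaussian}.
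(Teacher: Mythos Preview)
Your argument is correct and reaches the same $2S/B$ bound, but the paper takes a genuinely different route. The paper follows the \emph{objective-perturbation} template of~\cite{chaudhuri2011differentially} almost literally: it treats $\mu_1,\mu_2$ as the noise vectors that, when added to the \emph{objective}, make the respective minimizers $\omega_1,\omega_2$ coincide at the same output $w$, so that $\|\mu_1-\mu_2\|=\|\omega_1-\omega_2\|$; it then invokes $\lambda$-strong convexity (with $\lambda=1$) together with Cauchy--Schwarz to obtain $\|\omega_1-\omega_2\|\le\max_\omega\|\nabla r(\omega)\|\le 2S/B$, where $r(\omega)=\tfrac1B\bigl(h(x_1,\omega)-h(x_2,\omega)\bigr)$ is the per-example loss difference on the changed entry. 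Your output-perturbation reading short-circuits all of this: because the released quantity is $\bar g(B)+\mu$ rather than the $\arg\min$ of a perturbed objective, no change of variables enters, strong convexity is never used, and the $2S/B$ bound drops out of a single triangle inequality on the batch average. Your route is therefore more elementary and arguably more faithful to what Algorithm~\ref{client_dpsgd_12252020-l2max} actually does (it perturbs gradients, not objectives); the paper's route carries over the Chaudhuri et al.\ machinery wholesale, which is why the theorem is prefaced as holding ``under certain forms of convexity.'' Your closing observation that $\tfrac1\zeta e^{-\beta\|\mu\|}$ is a Laplace-type density rather than a Gaussian, and that this is precisely what buys pure $\epsilon$-DP, is a point the paper does not make explicit.
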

\begin{proof}
Let $R(x,w)$ and $r(x,w)$ be two continuous and differentiable functions.
When picking $\mu$ from the Gaussian distribution, for a specific $\mu_0$, the density at $\mu_0$ is proportional to $e^{-\beta||\mu_0||}$. Then we have for any $w$ given $x_1$ and $x_2$ be any two batches that differ in the value of one instance:
\begin{equation}
\frac{R(x_1,w)}{R(x_2,w)} = \frac{\frac{1}{\zeta} e^{-\beta||\mu_1||}}{\frac{1}{\zeta} e^{-\beta||\mu_2||}}   = e^{-\beta(||\mu_1||-||\mu_2||)}.
    \label{equa:adp1}
\end{equation}
Let $\omega_1$ and $\omega_2$ are the solutions, e.g. weights, respectively to non-private training function under neighboring inputs $x_1$ and $x_2$, then with triangle inequality, we have
\begin{equation}
||\mu_1||-||\mu_2|| \leq ||\mu_1-\mu_2||=||\omega_1-\omega_2||.
    \label{equa:adp2}
\end{equation}
Next we assume $R(w)$ and $r(w)$ are $\lambda$-strongly convex. By  $\lambda$-strongly convex~\cite{rockafellar2009variational}, we mean for all $\alpha \in (0,1)$, function $f_1$ and $f_2$ satisfy $ G( \alpha f_1 + (1-\alpha) f_2) \leq \alpha G(f_1) + (1-\alpha) G(f_2) - \frac{1}{2}\lambda\alpha(1-\alpha)||f_1-f_2||^2_2 $. Assign $R(w)$ to $h(x_1,w(t))$ and $r(w)$ to $h(x_2,w(t))-h(x_1,w(t))$, $\omega_1$ to $\arg \min_\omega h(x_1,w(t)$, and $\omega_2$ to $\arg \min_\omega h(x_2,w(t)$. According to the above assignment,
\begin{equation}
    \nabla R(\omega_1)=\nabla R(\omega_2) + \nabla r(\omega_2)=0
    \label{equa:adp3}
\end{equation}
Given the $\lambda$-strongly convex, we have, 
\begin{equation}
    (\nabla R(\omega_1) - \nabla R(\omega_1))^\top (\omega_1-\omega_2) \geq \lambda ||(\omega_1-\omega_2) ||^2.
    \label{equa:adp4}
\end{equation}
Combining Equation~\ref{equa:adp3} and~\ref{equa:adp4} and with Cauchy-Schwartz inequality, we get:
\begin{align*}
||\omega_1-\omega_2||*||\nabla r(\omega_2)|| \geq & (\omega_1-\omega_2)^T\nabla r(\omega_2)) \\
=     (\nabla R(\omega_1) - \nabla R(\omega_1))^T (\omega_1-\omega_2) \geq & \lambda ||(\omega_1-\omega_2) ||^2.
\end{align*}
By setting $\lambda$ to 1, we have $||\omega_1-\omega_2|| \leq \max_\omega ||\nabla r(\omega) || $ and $\nabla r(\omega) = \frac{1}{B} (x_1 \nabla h(x_1,w(t)) - x_2 \nabla h(x_2,w(t)))  \leq \frac{2S}{B}$.

Recall equation~\ref{equa:adp1} and equation~\ref{equa:adp2},  we have $\frac{R(x_1,w)}{R(x_2,w)} = e^{-\beta(||\mu_1||-||\mu_2||)} \leq e^{-\beta(||\omega_1 - \omega_2||)} \leq e^{-\beta\frac{2S}{B}}$. Combined with equation~\ref{equa:dp} in differential privacy definition, when $\beta=\frac{B\epsilon}{2S}$,  we have $\frac{R(x_1,w)}{R(x_2,w)} \leq \epsilon$ and thus completes the proof.
\end{proof}

{\bf Privacy Accounting.\/} 
Given that each local SGD is differentially private, next we accumulate the privacy spending with privacy composition in federated learning.
The base composition originated from the composition theorem is considered loose for accurately tracking per-step privacy spending.
Several privacy accounting methods have been proposed for tight privacy composition.
The most representative ones are advanced composition~\cite{dwork2010boosting}, zCDP~\cite{yu2019differentially}, and Moments Accountant~\cite{abadi2016deep}. All of them commonly assume a fixed $\delta$ to track $\epsilon$. 
We will use Moments accountant in our evaluation for its being the state-of-the-art tight privacy accounting method.
Based on the default implementation\footnote{\url{https://github.com/tensorflow/privacy/blob/master/tensorflow_privacy/privacy/analysis/compute_dp_sgd_privacy.py}}, the privacy spending $\epsilon$ is computed given the total rounds $T$, the noise scale $\sigma$, the privacy parameter $\delta$, and the sampling rate $q$. For  data sampling in federated learning, we follow Fed-CDP~\cite{wei2021gradient} due to the resemblance in the client-side per-example noise injection process. Then, the local sampling with replacement over the disjoint local datasets across all clients in federated learning can be modeled as global sampling with replacement over the global data. Consequently, differentially private federated learning can be viewed as an alternative implementation of centralized deep learning with differential privacy in algorithmic logic.

\textbf{Per-example Differential Privacy.}
Given that Fed-$\alpha$CDP algorithms add differential privacy noise to per-example gradients generated for iteration $l$ at client $i$ during local model training ($1\leq l\leq L, 1\leq i\leq K_t$),
the sampling rate can be computed as follows. At each round $t$, $K_t$ clients are sampled with $\frac{K_t\times D_i}{|D|}$ and for each client $i$ out of $K_t$, a batch of size $B$ is sampled over $D_i$ ($\frac{B}{|D_i|}$) for the $L$ iterations. According to~\cite{wei2021gradient}, the sampled local data over $K_t$ clients at round $t$ can be viewed as a global data sampling over $D$ with the sampling rate $q_1=\frac{B\times K_t}{|D|}$, with the sampling size $B*K_t$. 
Thus, the locally added per-example noise would have a global instance-level effect over the global collection $D$ of distributed local data $D_i$ across the $N$ clients ($D=\cup_{i}^{N} D_i$).

\textbf{Per-Client Differential Privacy.} 
Fed-$\alpha$CDP offers both per-instance and per-client differential privacy guarantee. Although Fed-$\alpha$CDP uses the per-iteration local training ($\epsilon_{til}, \delta_{til}$)-differentially private function $g_{til}$ to compute and inject Gaussian noise to the per-example gradient during local model training, Fed-$\alpha$CDP ensures that the per-client local model parameter update is generated all over noisy gradients from $L$ local iterations. Therefore, the local parameter update shared at client $i$ by $f_{ti}$ is sanitized and $f_{ti}$ is ($\epsilon_{ti}, \delta_{ti}$) differentially privacy with  $\epsilon_{ti}=\sum_l^L \epsilon_{til}$ and $\delta_{ti}=\sum_l^L \delta_{til}$ by sequential composition. In comparison, Fed-SDP offers differential privacy only at the client level. This is because Fed-SDP 
only injects differential privacy guided Gaussian noise to per-client local model update after completing local training at the client. Therefore, the client-level privacy for Fed-SDP relies on the client sampling
rate of $q_2 = \frac{K_t}{N}$ and per-round composition~\cite{mcmahan2017learning,geyer2017differentially}.  

\begin{table}[t]
\centering
\caption{\small Benchmark datasets and parameters}
\vspace{-0.1cm}
\scalebox{0.80}{
\small{
\begin{tabular}{|c|c|c|c|c|c|}
\hline
                  & MNIST  & CIFAR10 & LFW     & Adult  & Cancer \\ \hline
\# training data      & 60000  & 50000   & 2267    & 36631  & 426    \\ \hline
\# validation data       & 10000  & 10000   & 756     & 12211  & 143    \\ \hline
\# features        & 28*28  & 32*32*3 & 32*32*3 & 105    & 30     \\ \hline
\# classes         & 10     & 10      & 62      & 2      & 2      \\ \hline
\# data/client     & 500    & 400     & 300     & 300    & 400    \\ \hline
\# local iteration $L$ & 100    & 100     & 100     & 100    & 100    \\ \hline
local batch size $B$   & 5      & 4       & 3       & 3      & 4      \\ \hline
\# rounds  $T$        & 100    & 100     & 60      & 10     & 3      \\ \hline
non-private acc    & 0.9798 & 0.674   & 0.695   & 0.8424 & 0.993  \\ \hline
non-private cost(ms)    & 6.8 & 32.5   & 30.9   & 5.1 & 4.9  \\ \hline

\end{tabular}
}}
\label{table:dataset_setup}
\vspace{-0.4cm}
\end{table}

When conducting privacy analysis for a family of $(\epsilon,\delta)$-differentially private federated learning algorithms, two independent yet complementary methods can be employed~\cite{wei2021gradient_tifs}. (i) We can set a fixed target privacy budget and measure the accumulated privacy spending (or privacy loss) $\epsilon$ at each step. The training will stop when the accumulation of the per-step $\epsilon$ exceeds the target privacy budget at step $t$. In this {\it target privacy budget} approach, we evaluate and compare the outcome of different differential privacy algorithms in terms of the achieved model accuracy under the given target privacy budget. (ii) We can instead specify the target utility (model accuracy goal) and the total number of steps $T$. Then we measure the accumulated privacy spending at the final step $T$ or any $t<T$ when the target utility is met. In this {\it target model accuracy} approach, we analyze and compare the privacy spending of different differential privacy algorithms in terms of accumulated privacy spending for achieving the target accuracy (utility). Given that different differential privacy algorithms with different settings of parameters $S$ and $\sigma$ may have different convergence speeds under the given target accuracy, and some may achieve high accuracy sooner and thus terminate early, this may further impact the total accumulated per-step privacy spending.

 \begin{figure*}
\centerline{\includegraphics[scale=.50]{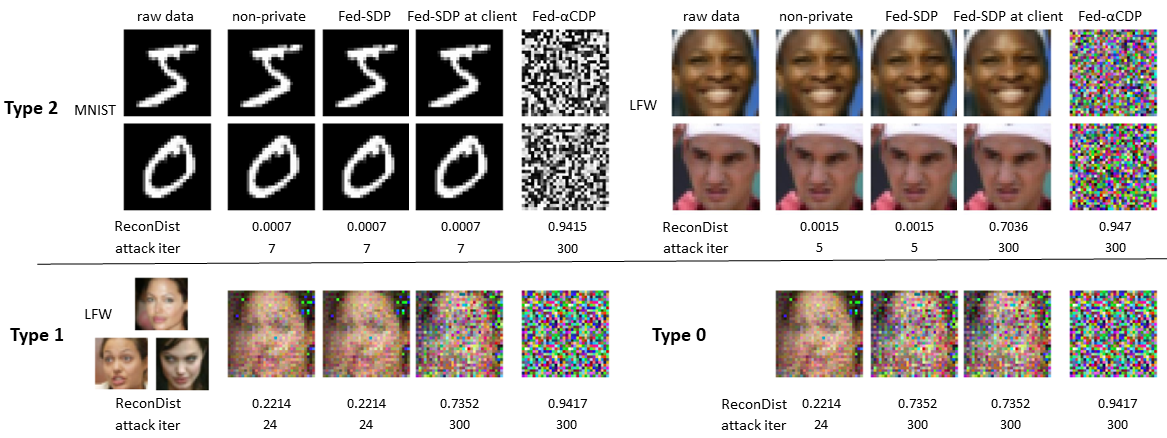}}
\vspace{-0.1cm}
\caption{\small Gradient leakage attack evaluation. With the maximum attack iteration set to 300. Fed-SDP can defend type-0 leakage but not type-1 leakage since its differential privacy sanitation is performed at the server.}
\label{figure:attack_evaluation}
\vspace{-0.2cm}
\end{figure*}

\section{Experimental Evaluation}
\label{sec:experiments}


We evaluate the proposed Fed-$\alpha$CDP on five benchmark datasets: MNIST, CIFAR10, LFW, Adult, and Cancer.
\textbf{Table~\ref{table:dataset_setup}} provides a detailed description of each dataset given the parameter setup. Specifically for each dataset, we include the training data/validation data split, number of features, number of classes, amount of data per client, number of local iterations batch size at local training, number of global rounds, as well as the test accuracy, and training cost for the non-private federated learning model. 
Similar to existing deep learning and federated with differential privacy literature~\cite{abadi2016deep,yu2019differentially,geyer2017differentially}, we evaluate the three image benchmark datasets on a deep convolutional neural network with two convolutional layers and one fully-connected layer. 
For attribute datasets, a fully-connected model with two hidden layers is used. 
Our federated learning setup follows the simulator in~\cite{geyer2017differentially} with a total of $N$ clients varying from 100, 1000 to 10000, and $K_t$ is set at a varying percentage of $N$, e.g., 5\%, 10\%, 20\%, 50\% per round. Unless otherwise specified, we use  $N=1000$ clients and 10\% of $N$ per round, e.g., $K_t=100$ participating clients. For privacy parameters, we follow~\cite{abadi2016deep} and set $C=4$ and $\sigma=6$ as default.
All experiments are conducted on an Intel 4 core i5-7200U CPU@2.50GHz machine with an NVIDIA Geforce 2080Ti GPU. 
We first report the empirical results on the resilience of Fed-$\alpha$CDP in comparison to existing differential privacy solutions. Second, 
we show that Fed-$\alpha$CDP obtains stronger differential privacy guarantee in terms of smaller $\epsilon$ spending under a target accuracy. Meanwhile, Fed-$\alpha$CDP can significantly boost the accuracy performance compared to Fed-SDP under a target privacy budget.
Finally, we show that besides the stronger leakage resilience, the proposed Fed-$\alpha$CDP outperforms existing adaptive clipping approaches: the quantile-based clipping~\cite{thakkar2019differentially}\footnote{\url{https://github.com/google-research/federated/tree/master/differential_privacy}} and AdaClip~\cite{pichapati2019adaclip}\footnote{\url{https://github.com/soominkwon/DP-dSNE}} with better test accuracy. We adopt their clipping implementation to our federated learning setup for fair comparison.

\begin{table}[t]
\centering
\caption{\small Average attack effectiveness comparison from 100 clients. $\tau$ is defined as the attack termination condition: the round when the constructed image $x_{rec}$ is visually similar to the private training data $x$ or 300 if the attack cannot succeed beforehand.}
\vspace{-0.1cm}
\scalebox{0.74}{
\small{
\begin{tabular}{|c|c|c|c|c|c|c|c|}
\hline
\multicolumn{2}{|c|}{\multirow{2}{*}{attack effect}}                      & \multicolumn{3}{c|}{MNIST} & \multicolumn{3}{c|}{LFW}   \\ \cline{3-8} 
\multicolumn{2}{|c|}{}                                       & resilient & ReconDist & $\tau$ & resilient & ReconDist & $\tau$ \\ \hline
\multirow{6}{*}{\rotatebox{90}{\scriptsize type 0}} & non-private                     & \textcolor{red}{N}       & 0.1549 & 6       & \textcolor{red}{N}       & 0.2214 & 24      \\ \cline{2-8} 
                           & Fed-SDP                  & Y       & 0.6991 & 300     & Y       & 0.7352 & 300    \\ \cline{2-8}
                    & Fed-SDP at client                  & Y       & 0.6991 & 300     & Y       & 0.7352 & 300    \\ \cline{2-8}
                      & Quantile~\cite{thakkar2019differentially}                         & \textcolor{red}{N}       & 0.4113 & 9     & \textcolor{red}{N}       & 0.3955 & 28     \\ \cline{2-8} 
                              & Adaclip~\cite{pichapati2019adaclip}                  & \textcolor{red}{N}       & 0.4647 & 9     & \textcolor{red}{N}       & 0.3421 &    27  \\ \cline{2-8} 
                           & {\bf Fed-$\alpha$CDP[$\sigma$]}  & Y       & \textbf{0.937}  & 300     & Y       & \textbf{0.9417} & 300     \\ \hline
\multirow{6}{*}{\rotatebox{90}{\scriptsize type 1}}  & non-private & \textcolor{red}{N}       & 0.1549 & 6       & \textcolor{red}{N}       & 0.2214 & 24   \\ \cline{2-8} 
  & Fed-SDP                  &    \textcolor{red}{N}    & 0.1549 & 6     & \textcolor{red}{N}    &   0.2214 & 24   \\ \cline{2-8} 
                           & Fed-SDP at client               & Y       & 0.6991 & 300     & Y       & 0.7352 & 300       \\ \cline{2-8} 
                             & Quantile~\cite{thakkar2019differentially}                         & \textcolor{red}{N}       & 0.5113 & 9     & \textcolor{red}{N}       & 0.3955 & 28     \\ \cline{2-8} 
                              & Adaclip~\cite{pichapati2019adaclip}                  & \textcolor{red}{N}       & 0.4647 & 9     & \textcolor{red}{N}       & 0.3421 &    27  \\ \cline{2-8} 
                           & {\bf Fed-$\alpha$CDP[$\sigma$]}  & Y       & \textbf{0.937}  & 300     & Y       & \textbf{0.9417} & 300      \\ \hline
\multirow{6}{*}{\rotatebox{90}{\scriptsize type 2}}    & non-private                     & \textcolor{red}{N}       & 0.0008 & 7       & \textcolor{red}{N}       & 0.0015 & 25      \\ \cline{2-8} 
                           & Fed-SDP                         & \textcolor{red}{N}       & 0.0008 & 7       & \textcolor{red}{N}       & 0.0015 & 25      \\ \cline{2-8} 
                             & Fed-SDP at client                           & \textcolor{red}{N}       & 0.0008 & 7       & \textcolor{red}{N}       & 0.0015 & 25      \\ \cline{2-8} 
                              & Quantile~\cite{thakkar2019differentially}                         & \textcolor{red}{N}       & 0.1252 & 9     & \textcolor{red}{N}       & 0.3955 & 28     \\ \cline{2-8} 
                              & Adaclip~\cite{pichapati2019adaclip}                  & \textcolor{red}{N}       & 0.2465 & 9     & \textcolor{red}{N}       & 0.3421 &    27  \\ \cline{2-8} 
                           &  {\bf Fed-$\alpha$CDP[$\sigma$]}  & Y       & \textbf{0.9415} & 300     & Y       & \textbf{0.947}  & 300     \\ \hline
\end{tabular}
}}
\label{table:attack_evaluation}
\vspace{-0.4cm}
\end{table}

\subsection{Gradient Leakage Resiliency}
\label{sec:leakage}

\begin{table*}[t]
\centering
\caption{\small Effectiveness of adaptive sensitivity optimization in model accuracy under 12 different settings of federated learning on MNIST with $T=100$ by varying $N$ clients and $K_t/N$, with $C=4$, $\sigma=6$. Time cost measured by second per local iteration.}
\vspace{-0.1cm}
\scalebox{0.85}{
\small{
\begin{tabular}{|c|c|c|c|c|c|c|c|c|c|c|c|c|c|}
\hline
                            & \multicolumn{4}{c|}{N=100}    & \multicolumn{4}{c|}{N=1000}   & \multicolumn{4}{c|}{N=10000}  & \multirow{2}{*}{cost} \\ \cline{1-13}
\%=K\_t/N                      & 5\%   & 10\%  & 20\%  & 50\%  & 5\%   & 10\%  & 20\%  & 50\%  & 5\%   & 10\%  & 20\%  & 50\%  &                       \\ \hline
non-private                 & 0.924 & 0.954 & 0.959 & 0.965 & 0.977 & 0.980 & 0.978 & 0.978 & 0.979 & 0.980 & 0.980 & 0.979 & 0.0068s               \\ \hline
Fed-SDP                     & 0.803 & 0.823 & 0.834 & 0.872 & 0.925 & 0.928 & 0.934 & 0.937 & 0.935 & 0.939 & 0.941 & 0.944 & 0.0069s               \\ \hline
Fed-CDP~\cite{wei2021gradient}                     & 0.815 & 0.831 & 0.858 & 0.903 & 0.951 & 0.956 & 0.96  & 0.964 & 0.966 & 0.963 & 0.968 & 0.966 & 0.0224s               \\ \hline
Fed-$\alpha$CDP             & \textbf{0.854} & \textbf{0.861} & \textbf{0.892} & \textbf{0.921} & \textbf{0.965} & \textbf{0.979} & \textbf{0.977} & \textbf{0.976} & \textbf{0.975} & \textbf{0.978} & \textbf{0.979} & \textbf{0.98}  & 0.023s                \\ \hline
\end{tabular}
}}
\label{table:kt_vary}
\vspace{-0.4cm}
\end{table*}


This section evaluates the resilience of Fed-$\alpha$CDP[$\sigma$] by comparing it with four most relevant and representative differential privacy techniques against all three types of gradient leakage threats (recall Threat Model in Section~\ref{sec:threatmodel}).   
\textbf{Table~\ref{table:attack_evaluation}} reports the measurement results from 100 clients in terms of the resilience (Yes or No), the average attack reconstruction distance, and attack iterations ($\tau$). We measure the attack reconstruction distance by the root mean square deviation (RMSE) between reconstructed input $x_{rec}$ and its private ground-truth counterpart $x$: $\frac{1}{U}\sum\nolimits_{i = 1}^U {(x(i) - {x_{rec}}(i)} {)^2}$ where $U$ denotes the total number of features in the input. 
In Table~\ref{table:attack_evaluation}, the column $\tau$ for MNIST (5th column) and the column $\tau$ for LFW (8th column) are reporting the number of reconstruction iterations that the gradient leakage attack can successfully reconstruct the training data $x_{rec}$, such that $x_{rec}$ and the private training data $x$ are visually indistinguishable. 
We set the attack termination condition to be 300 iterations in the sense that when reaching 300 iterations, if the reconstructed training data $x_{rec}$ from the leaked gradient is still not , visually close to the private training data $x$, the attack is considered failed. Therefore, the very small number of attack iterations, i.e., around 7 attack iterations for MNIST and 26 attack iterations for LFW, means that the reconstruction-based gradient leakage attack can succeed before reaching the maximum attack iterations of 300. 
The attack experiment is performed on gradients from the first local iteration as gradients at early training iterations tend to leak more information than gradients in the later stage~\cite{wei2020framework}. 
We make three observations: (1) 
Under type-0 attack (the top 6 rows), we show that Fed-SDP~\cite{mcmahan2017learning}, Fed-SDP at client~\cite{geyer2017differentially}, and our Fed-$\alpha$CDP are equally resilient, because attack is failed after exhausted all 300 attack iterations. Fed-$\alpha$CDP shows the highest RMSE value, indicating that the protection by Fed-$\alpha$CDP is strongest with the largest distance between the reconstructed training data $x_{rec}$  and the raw training data $x$. Type-1 gradient leakage attack and the attackers succeeded after 6 iterations for MNIST and 24 iterations for LFW. In comparison, Fed-SDP at client~\cite{geyer2017differentially} and our Fed-$\alpha$CDP remain to be resilient because attack is failed after 300 attack iterations. Similarly, Fed-$\alpha$CDP shows the highest RMSE value measured between training data $x_{rec}$  and the raw training data $x$, indicating that Fed-$\alpha$CDP provides stronger protection than that of~\cite{geyer2017differentially} (Fed-SDP at client). 
Finally, under type-2 attack (the bottom 6 rows), we show that both Fed-SDP~\cite{mcmahan2017learning} and Fed-SDP at client~\cite{geyer2017differentially} fail to protect against type-2 gradient leakage attack, and in both differentially private scenarios, the attacker succeeded after 6 iterations for MNIST and 25 iterations for LFW. In comparison, our Fed-$\alpha$CDP remain to be resilient with high RMSE. Note that we will use Fed-SDP in the rest of the paper for client-level differential privacy noise injection. Given that Fed-SDP and Fed-SDP at client apply the same noise at different locations, such noise injection procedure would only impact the capability of gradient leakage resilience rather than the accuracy and $\epsilon$-privacy spending.
(2) Fed-$\alpha$CDP[$\sigma$] outperforms the other four approaches with the highest level of resilience against gradient leakage attacks with the largest reconstruction distance, making the attack hardest to succeed for all three types of leakages.
(3) The quantile-based clipping~\cite{thakkar2019differentially} and AdaClip~\cite{pichapati2019adaclip} have poor overall resilience against all three types of gradient leakages. This is likely because both focused on reducing the clipping bound. With the sensitivity defined by the declining clipping bound for accuracy improvement, both reduce the amount of noise added as the number of training rounds increases. But both are much weaker in gradient leakage resilience due to insufficient differential privacy noise injection, especially in the early stage of training. 
\textbf{Figure~\ref{figure:attack_evaluation}} shows a visual comparison of Fed-$\alpha$CDP[$\sigma$] with Fed-SDP using examples from MNIST and LFW datasets, demonstrating the stronger resilience of Fed-$\alpha$CDP[$\sigma$] under all three types of gradient leakage attacks.



\subsection{$l_2$-max Sensitivity: Utility Impact Analysis}

\textbf{Impact of $N$ and $K_t$\/.} 
We first compare the accuracy performance of Fed-$\alpha$CDP with Fed-SDP, Fed-CDP baseline~\cite{wei2021gradient}, and the non-private case under 12 different federated learning configurations by varying total $N$ clients from 100, 1000 to 10000. In each setting of $N$, we additionally vary $K_t$ contributing clients at each round with 5\%, 10\%, 20\%, and 50\% of $N$.
\textbf{Table~\ref{table:kt_vary}} reports the results. We make two observations: (1) Fed-$\alpha$CDP consistently achieves the highest accuracy for all 12 settings of $N$ and $K_t$ and consistently improves the Fed-CDP baseline. With all other parameters fixed ($C, \sigma, T, K_t, N$), Fed-$\alpha$CDP benefits significantly from $l_2$-max sensitivity. (2) With $l_2$-max sensitivity, Fed-$\alpha$CDP can achieve an accuracy comparable to that of the non-private case, especially for $N=1000$ or larger. 
With $N=10000$, Fed-$\alpha$CDP can offer 97.5\% to 98.0\% accuracy performance for all four settings of $K_t$, compared to 97.9\% of the non-private algorithm.

\begin{table}[t]
\centering
\caption{\small  Impact of varying clipping bound settings on the accuracy performance of Fed-$\alpha$CDP, compared to Fed-SDP and Fed-CDP with fixed $S=C$ and fixed $\sigma$ for all five datasets ($\sigma=6$). The accuracy is measured at 100, 100, 60, 10, and 3 rounds for MNIST, CIFAR10, LFW, Adult, and Cancer datasets, respectively. For each dataset, the best accuracy performance is highlighted.}
\vspace{-0.1cm}
\scalebox{0.72}{
\small{
\begin{tabular}{|c|c|c|c|c|c|c|c|c|c|}
\hline
\multicolumn{2}{|c|}{clipping C}                                 & 0.1 & 0.5 & 1   & 2   & 4   & 8   & 16  & 32   \\ \hline
\multirow{4}{*}{\rotatebox{90}{\footnotesize MNIST}}    & non-private                 & \multicolumn{8}{c|}{0.980}                                      \\ \cline{2-10} 
                          & Fed-SDP                    & 0.883 & 0.909 & 0.944 & 0.940 & 0.928 & 0.911 & \textcolor{red}{0.656} & \textcolor{red}{0.214} \\ \cline{2-10} 
                          
                           & Fed-CDP~\cite{wei2021gradient}                    & 0.885 & 0.914 & 0.934 & 0.943 & \textbf{0.956} & 0.923 & 0.695 & 0.248 \\ \cline{2-10} 
                     
                          & Fed-$\alpha$CDP             & 0.886 & 0.917 & 0.945 & 0.965 & \textbf{0.979} & 0.975 & 0.976 & 0.976   \\ \hline
                      
\multirow{4}{*}{\rotatebox{90}{\footnotesize CIFAR10}}  & non-private                 & \multicolumn{8}{c|}{0.674}                                 \\ \cline{2-10} 
                        & Fed-SDP                     & \textcolor{red}{0.275} & \textcolor{red}{0.430} & 0.546 & 0.609 & 0.627 & \textcolor{red}{0.466} & \textcolor{red}{0.269} & \textcolor{red}{0.189}  \\ \cline{2-10} 
                       
                          & Fed-CDP~\cite{wei2021gradient}                     & 0.272 & 0.408 & 0.568 & 0.619 & \textbf{0.633} & 0.611 & 0.326 & 0.189  \\ \cline{2-10} 
                    
                          & Fed-$\alpha$CDP             & \textcolor{red}{0.286} & \textcolor{red}{0.439} & 0.576 & 0.628 & 0.645 & 0.643 & \textbf{0.649} & 0.647   \\ \hline
                   
\multirow{4}{*}{\rotatebox{90}{\footnotesize LFW}}     & non-private                 & \multicolumn{8}{c|}{0.695}                                     \\ \cline{2-10} 
                          & Fed-SDP                     & \textcolor{red}{0.248} & 0.592 & 0.601 & 0.635 & 0.635 & \textcolor{red}{0.534} & \textcolor{red}{0.257} &\textcolor{red}{0.117} \\ \cline{2-10} 
                        
                          & Fed-CDP~\cite{wei2021gradient}                      & 0.237 & 0.582 & 0.594 & 0.619 & \textbf{0.649} & 0.601 & 0.295 & 0.104 \\ \cline{2-10} 
                 
                          & Fed-$\alpha$CDP             & \textcolor{red}{0.251} & 0.594 & 0.603 & 0.642 & \textbf{0.683} & 0.671 & 0.668 & 0.654   \\ \hline
                    
\multirow{4}{*}{\rotatebox{90}{\footnotesize Adult}}     & non-private                 & \multicolumn{8}{c|}{0.843}             \\ \cline{2-10} 

  & Fed-SDP                    & \textcolor{red}{0.751} & 0.797  & 0.822 & 0.819 & 0.813 & 0.808 & \textcolor{red}{0.755} & \textcolor{red}{0.740}  \\ \cline{2-10} 
                          & Fed-CDP~\cite{wei2021gradient}                   & 0.752 & 0.81  & 0.822 & \textbf{0.825} & 0.824 & 0.761 & 0.757 & 0.755  \\ \cline{2-10} 
                          & Fed-$\alpha$CDP            & \textcolor{red}{0.755} & 0.825 & 0.832 & \textbf{0.838} & \textbf{0.838} & 0.83  & 0.831 & 0.834  \\ \hline
                
\multirow{4}{*}{\rotatebox{90}{\footnotesize Cancer}}    & non-private                 & \multicolumn{8}{c|}{0.993}                                     \\ \cline{2-10}

                         & Fed-SDP                     & 0.958 & 0.972 & 0.972 & 0.972 & 0.979 & 0.944 & 0.937 & \textcolor{red}{0.923}  \\ \cline{2-10} 
                         
                          & Fed-CDP~\cite{wei2021gradient}                   & 0.958 & 0.965 & 0.972 & \textbf{0.979} & \textbf{0.979} & 0.972 & 0.965 & 0.951  \\ \cline{2-10} 
                     
                          & Fed-$\alpha$CDP      & 0.965 & 0.972 & 0.979 & \textbf{0.993} & 0.986 & \textbf{0.993} & 0.986 & 0.986   \\ \hline
                
\end{tabular}
}}
\label{table:clipping_l2max}
\vspace{-0.4cm}
\end{table}


\textbf{Impact of varying clipping bound $C$.} 
This set of experiments evaluates the effectiveness of $l_2$-max sensitivity by comparing Fed-$\alpha$CDP with Fed-SDP, Fed-CDP baseline, and the non-private training for the five benchmark datasets with constant clipping varying $C$ from 0.1 to 32. \textbf{Table~\ref{table:clipping_l2max}} reports the results. We make two observations: (1) With $l_2$-max sensitivity, Fed-$\alpha$CDP consistently outperforms Fed-CDP and Fed-SDP under all settings of $C$ for all five datasets. (2) Fed-CDP with fixed clipping-based sensitivity performs poorly when the clipping bound $C$ is set to larger values (e.g., 16, 32) and smaller values (e.g., 0.1, 0.5, 1). This shows that Fed-CDP is overly sensitive to a proper setting of the clipping bound $C$ and requires careful tuning for finding the right clipping bound $C$ for different datasets to achieve optimal results. 
In comparison,
Fed-$\alpha$CDP is much less sensitive to the clipping bound setting, thanks to the adaptive nature of $l_2$-max sensitivity. Hence, Fed-$\alpha$CDP no longer depends on fine-tuning to find the proper clipping bound $C$ as long as the clipping bound $C$ is not too small, e.g., 1.5 or 2 times the $l_2$-max norm in non-private case. This is because when using a very small clipping bound, one may clip off too much information from those informative gradients prior to noise injection, especially in early rounds where per-example gradients tend to be large. 


\subsection{$l_2$-max Sensitivity: Privacy Impact Analysis}

\textbf{Privacy analysis under fixed total rounds $T$.}
In this set of experiments, we fix $\delta=1e-5$, $\sigma=6$,  $C=4$, and compare Fed-$\alpha$CDP with Fed-SDP in terms of privacy and utility (model accuracy). The $\epsilon$ spending is measured using Moments accountant, at rounds 100, 100, 60, 10, and 3 for MNIST, CIFAR10, LFW, Adult, and Cancer datasets.
\textbf{Table~\ref{table:privacy_evaluation}} reports the results. We make two observations: (1) Under fixed $\sigma$ and $T$, Fed-$\alpha$CDP and Fed-CDP will accumulate the same $\epsilon$ spending under all five privacy accounting methods. This is consistent with Lemma~\ref{lemma:gaussian_mechanism}: privacy spending $\epsilon$ is mainly correlated with noise scale $\sigma$ given a fixed $\delta$, a fixed sampling rate $q$ and a fixed $T$ rounds. The $\epsilon$ spending of Fed-$\alpha$CDP is 
smaller compared to Fed-SDP, indicating stronger differential privacy guarantee.
(2) Fed-$\alpha$CDP provides higher accuracy than that of Fed-SDP and Fed-CDP 1\% $\sim$ 5\% for all five datasets. This is because $l_2$-max sensitivity ensures smaller noise injection over the $T$ rounds of federated learning, whereas Fed-SDP and Fed-CDP use a constant noise variance due to fixed sensitivity and fixed $\sigma$, and inject the same amount of noise each time even when federated learning comes close to convergence.

\begin{table}[t]
\centering
\caption{\small Comparing Fed-SDP with Fed-$\alpha$CDP approaches for $\epsilon$ privacy spending using Moments Accountant with $C=4$, $\sigma=6$, $\delta=1e-5$. $\epsilon$ is measured at 100, 100, 60, 10, 3 rounds for MNIST, CIFAR10, LFW, Adult and Cancer datasets,  respectively.}
\vspace{-0.1cm}
\scalebox{0.80}{
\small{
\begin{tabular}{|c|c|c|c|c|c|c|}
\hline
\multicolumn{2}{|c|}{$\epsilon$ spending: fixed $T$} & MNIST & CIFAR10 & LFW   & adult & cancer \\ \hline
\multicolumn{2}{|c|}{non-private accuracy}   & 0.980 & 0.674   & 0.695 & 0.843 & 0.993  \\ \hline
\multirow{2}{*}{Fed-SDP}                & $\epsilon$ & 0.854 & 0.854   & 0.668 & 0.303 & 0.147  \\ \cline{2-7} 
                                        & accuracy        & 0.928 & 0.627   & 0.635 & 0.813 & 0.979  \\ \hline
\multirow{2}{*}{Fed-CDP~\cite{wei2021gradient}}                & $\epsilon$ & \textbf{0.823} & \textbf{0.823}   & \textbf{0.636} & \textbf{0.276} & \textbf{0.138}  \\ \cline{2-7} 
                                        & accuracy        & 0.956 & 0.633   & 0.649 & 0.824 & 0.979  \\ \hline
\multirow{2}{*}{Fed-$\alpha$CDP} & $\epsilon$ & \textbf{0.823} & \textbf{0.823}   & \textbf{0.636} & \textbf{0.276} & \textbf{0.138}  \\ \cline{2-7} 
                                        & accuracy  & \textbf{0.979} & \textbf{0.645}   & \textbf{0.683} & \textbf{0.838} & \textbf{0.986}  \\ \hline
\end{tabular}
}}
\label{table:privacy_evaluation} 
\vspace{-0.2cm}
\end{table}

\begin{table}[t]
\centering
\caption{\small Differential privacy spending $\epsilon$ for Fed-$\alpha$CDP and Fed-SDP under target accuracy goal with clipping bound $C=4$, fixed noise scale $\sigma=6$ and $\delta=1e-5$. The target accuracy is set to the accuracy of Fed-SDP for $T$ rounds given in Table~\ref{table:privacy_evaluation}.}
\vspace{-0.1cm}
\scalebox{0.85}{
\small{
\begin{tabular}{|c|c|c|c|c|c|c|}
\hline
\multicolumn{2}{|c|}{$\epsilon$ spending: target acc} & MNIST & CIFAR10 & LFW   & adult & cancer \\ \hline
\multicolumn{2}{|c|}{target acc}                      & 0.928 & 0.627   & 0.635 & 0.813 & 0.979  \\ \hline
\multirow{2}{*}{Fed-SDP}             & rounds         & 100   & 100     & 60    & 10    & 3      \\ \cline{2-7} 
                                     & $\epsilon$     & 0.854 & 0.854   & 0.668 & 0.303 & 0.147  \\ \hline
\multirow{2}{*}{Fed-$\alpha$CDP}     & rounds         & \textbf{11}    & \textbf{67}      & \textbf{34}    & \textbf{5}    & \textbf{1}      \\ \cline{2-7} 
                                     & $\epsilon$     & \textbf{0.285} & \textbf{0.672}   & \textbf{0.478} & \textbf{0.184} & \textbf{0.084}  \\ \hline
\end{tabular}
}}
\label{table:targetacc_changingrounds}
\vspace{-0.4cm}
\end{table}

\textbf{Privacy analysis under target accuracy.\/}
Recall Theorem~\ref{theorem:gaussian} and Moments accountant, all privacy accounting methods are dataset-independent and clipping-independent when the following parameters are fixed: noise scale $\sigma$, total $T$ rounds, sampling rate $q$, and privacy parameter $\delta$. Hence, in this set of experiments, we measure and compare differential privacy spending $\epsilon$ for Fed-$\alpha$CDP and Fed-SDP from a different perspective: we measure and compare their respective privacy spending under a target accuracy goal, given fixed noise scale $\sigma$. We set the target accuracy using the accuracy of Fed-SDP at round $T$ and measure the accumulated privacy spending and the number of rounds required for Fed-$\alpha$CDP to achieve the target accuracy for all five datasets. \textbf{Table~\ref{table:targetacc_changingrounds}} reports the results. We make two observations: (1) Fed-$\alpha$CDP achieves the target accuracy at the 11th, 67th, 34th, 5th, and 1st round for MNIST, CIFAR10, LFW, Adult, and Cancer datasets respectively. $l_2$-max sensitivity smoothly and adaptively reduces the amount of noise injected as federated learning progresses in the number of rounds. 
This result also
shows that $l_2$-max sensitivity alone can largely reduce the noise variance compared to fixed sensitivity using a constant clipping bound $C$. 
(2) By achieving the target accuracy early, prior to reaching the $T$ rounds, Fed-$\alpha$CDP offers a much smaller accumulated privacy spending $\epsilon$ compared to that of Fed-SDP. 

 \begin{table}[t]
\centering
\caption{\small Accuracy utility of Fed-$\alpha$CDP under four different noise scale decaying policies with $C=4$, $\delta=1e-5$ and under the target $\epsilon$ given in Table~\ref{table:privacy_evaluation}. $\sigma_{0}=15$. }
\vspace{-0.1cm}
\scalebox{0.83}{
\small{
\begin{tabular}{|c|c|c|c|c|c|c|}
\hline
\multicolumn{2}{|c|}{ accuracy utility: target $\epsilon$}                           & MNIST & CIFAR10 & LFW   & Adult & Cancer \\ \hline
\multicolumn{2}{|c|}{non-private accuracy}        & 0.980 & 0.674   & 0.695 & 0.843 & 0.993  \\ \hline
\multirow{3}{*}{Fed-CDP~\cite{wei2021gradient}} & target $\epsilon$   & 0.823 & 0.823   & 0.636 & 0.276 & 0.138  \\ \cline{2-7} 
& accuracy & 0.956 & 0.633   & 0.649 & 0.824 & 0.979  \\ \cline{2-7} 
                                      & rounds   & 100   & 100     & 60    & 10    & 3  
                                      \\ \hline
\multirow{2}{*}{\begin{tabular}[c]{@{}c@{}} Fed-$\alpha$CDP \\ fixed $\sigma$ \end{tabular}} 
& accuracy & 0.979 & 0.645   & 0.683 & 0.838 & 0.986  \\ \cline{2-7} 
                                      & rounds   & 100   & 100     & 60    & 10    & 3  
                                      \\ \hline
\multirow{2}{*}{\begin{tabular}[c]{@{}c@{}} Fed-$\alpha$CDP[$\sigma$] \\ linear $\sigma$ \end{tabular}}      & accuracy & 0.981 & 0.645   & 0.683 & 0.844 & 0.986  \\ \cline{2-7} 
                                      & rounds   & 95    & 95      & 55    & 8     & 3      \\ \hline
\multirow{2}{*}{\begin{tabular}[c]{@{}c@{}} Fed-$\alpha$CDP[$\sigma$] \\ staircase $\sigma$ \end{tabular}}   & accuracy & \textbf{0.982} & \textbf{0.646}   & \textbf{0.683} & \textbf{0.847} & \textbf{0.993}  \\ \cline{2-7} 
                                      & rounds   & 95    & 95      & 55    & 8     & 3      \\ \hline
\multirow{2}{*}{\begin{tabular}[c]{@{}c@{}} Fed-$\alpha$CDP[$\sigma$] \\ exponential $\sigma$ \end{tabular}}  & accuracy & \textbf{0.983} & \textbf{0.646}   & \textbf{0.684} & \textbf{0.847} & \textbf{0.993}  \\ \cline{2-7} 
                                      & rounds   & 96    & 96      & 55    & 8     & 3      \\ \hline
\multirow{2}{*}{\begin{tabular}[c]{@{}c@{}} Fed-$\alpha$CDP[$\sigma$] \\ cyclic $\sigma$ \end{tabular}}      & accuracy & 0.979 & 0.645   & 0.683 & 0.844 & 0.986  \\ \cline{2-7} 
                                      & rounds   & 80    & 80      & 47    & 7     & 3      \\ \hline
\end{tabular}
}}
\label{table:epsguidednoisedecay}
\vspace{-0.2cm}
\end{table}

\subsection{Dynamic Noise Scale: Privacy and Utility Analysis}
\label{sec:noise_scale_decay}

{\bf Utility analysis under target privacy budget $\epsilon$.\/}
We first evaluate the effectiveness of  Fed-$\alpha$CDP[$\sigma$] with noise scale decay when consuming the same amount of privacy budget $\epsilon$. For fair comparison, the target privacy budget $\epsilon$ is set to be the privacy spending of Fed-CDP at 100, 100, 60, 10, and 3 rounds for MNIST, CIFAR10, LFW, Adult, and Cancer datasets, respectively. All four noise scale decay policies are evaluated for adaptive $\sigma$ optimization with initial $\sigma_{0}=15$ and ending $\sigma_T$ set to 4.85. The dynamic noise scale $\sigma$ is set this way to not violate the $\epsilon$ setting in the original experiment for Fed-CDP for the same number of rounds. The early rounds hold a noise scale $\sigma_t$ much larger than the default $\sigma=6$ for gradient leakage resilience. Consequently, the resulting $\epsilon$ spending for Fed-$\alpha$CDP[$\sigma$] is smaller than the $\epsilon$ spending for Fed-CDP at this stage. The later rounds adopt a noise scale $\sigma_t$ smaller than the default $\sigma=6$ for accuracy. This leads to a larger $\epsilon$ spending of Fed-$\alpha$CDP[$\sigma$] than the $\epsilon$ spending of Fed-CDP at the later stage. Overall, we are able to achieve better accuracy with the same privacy spending at (approximately) the same round.
\textbf{Table~\ref{table:epsguidednoisedecay}} reports the results. 
We make the following two observations: 
(1)  under a target $\epsilon$ privacy spending and for all four decaying policies of adaptive $\sigma$, 
Fed-$\alpha$CDP[$\sigma$] with noise scale decaying consistently shows higher model accuracy and faster convergence compared to Fed-$\alpha$CDP with fixed noise scale and Fed-CDP baseline. When the accumulated privacy loss reaches the target privacy budget for all four algorithms, we terminate the global training and record the accuracy and number of rounds used. 
The result is consistent with our analysis: by combining $l_2$-max sensitivity and decaying noise scale, Fed-$\alpha$CDP[$\sigma$] will generate the most adaptive noise variance, resulting in injecting smoothly reduced noise as federated learning comes close to convergence. (2) Empirically, the adaptive noise scale $\sigma$ with exponential decay policy is the winner for Fed-$\alpha$CDP[$\sigma$], providing the smallest privacy loss measured by
$\epsilon$.

\subsection{Time Cost of Adaptive Parameter Optimization} 
\label{sec:timecost}
This set of experiments compares the time cost of Fed-$\alpha$CDP approaches with Fed-SDP, Fed-CDP, and the non-private algorithm on all five benchmark datasets. \textbf{Table~\ref{table:timecost}} reports the results. We make three observations. 
(1) The two Fed-$\alpha$CDP algorithms incur a very similar time cost over all five datasets, and only adds neglectable cost to the Fed-CDP baseline. 
(2) Since $K_t$ clients can perform local training in parallel in federated learning, the overall time spent for one round is determined by the slowest client local training at each round. Although Fed-$\alpha$CDP approaches incur additional computation overhead for generating the $l_2$-max sensitivity and sanitizing per-example gradients in each local iteration,  their absolute time cost remains low.
(3) The relative cost is smaller when the model is simpler with fewer parameters. For example, CIFAR10 and LFW spend relatively more time compared to MNIST. The two attribute datasets have a lower absolute time cost. 
It is also possible to combining differential privacy with gradient compression for reduced communication and sanitization cost, we consider it future work due to the space limit.

\begin{table}[t]
\centering
\caption{\small Time cost per local iteration per client in seconds}
\vspace{-0.1cm}
\scalebox{0.83}{
\small{
\begin{tabular}{|c|c|c|c|c|c|}
\hline
          & MNIST  & CIFAR10 & LFW    & Adult  & Cancer \\ \hline
non-private    & 0.0068 & 0.0325   & 0.0309 & 0.0051 & 0.0051 \\ \hline
Fed-SDP  & 0.0069 & 0.0338   & 0.0313 & 0.0052 & 0.0051 \\ \hline
Fed-CDP~\cite{wei2021gradient}  & 0.0224 & 0.1315   & 0.1124 & 0.0118 & 0.0119 \\ \hline
Fed-$\alpha$CDP  & 0.0230 & 0.1321 & 0.1146  &  0.0121 & 0.0120  \\ \hline
Fed-$\alpha$CDP[$\sigma$]  & 0.0231 & 0.1321 & 0.1147  &  0.0122 & 0.0120  \\ \hline

\end{tabular}
}}
\label{table:timecost}
\vspace{-0.2cm}
\end{table}

\subsection{Comparing Fed-$\alpha$CDP[$\sigma$] with Adaptive Clipping}


Adaptive clipping approaches~\cite{pichapati2019adaclip,thakkar2019differentially} can be viewed as an alternative mechanism to improve model accuracy of differentially private federated learning by reducing the noise injection over the iterative rounds of joint training.  Adaclip~\cite{pichapati2019adaclip} performs the clipping bound estimation based on the coordinates and the loss function, whereas  the quantile clipping~\cite{pichapati2019adaclip} estimates the clipping bound using the quantile of the unclipped gradient norm. 
In this set of experiments, we compare the effectiveness of Fed-$\alpha$CDP[$\sigma$] with Quantile clipping and Adaclip.
Given that both quantile clipping~\cite{pichapati2019adaclip} and Adaclip~\cite{pichapati2019adaclip} use a rather small noise scale ($\leq 1$) in their implementation, we adopt their clipping implementation into our default initial setting of $\sigma=6$ and $C_0=4$ for fair comparison and report their model accuracy.
\textbf{Table~\ref{table:adaptive_clipping}} shows the results. 
We make three observations: (1) Fed-$\alpha$CDP[$\sigma$] consistently outperforms all other approaches, achieving the largest accuracy enhancements. (2) Compared to AdaClip and Quantile clipping, Fed-$\alpha$CDP[$\sigma$] incurs the smallest time cost while offering the highest accuracy improvement. (3) Under the default setting, thw Quantile-based clipping and Adaclip are not resilient against gradient leakage attacks. In comparison, Fed-$\alpha$CDP offers high resilience with comparable model accuracy. 

\begin{table}[t]
\centering
\caption{\small Comparing with existing adaptive clipping methods in terms of accuracy, time cost (sec/local iteration), and attack resilience.}
\vspace{-0.1cm}
\scalebox{0.79}{
\small{
\begin{tabular}{|c|c|c|c|c|c|c|}
\hline
\multicolumn{2}{|c|}{}                        & MNIST  & CIFAR10 & LFW    & Adult  & Cancer \\ \hline
\multirow{3}{*}{non-private}  & accuracy   & \textbf{0.980}  & \textbf{0.674}   & \textbf{0.695}  & \textbf{0.843}  & \textbf{0.993}  \\ \cline{2-7} 
                                 & time cost  & 0.0068  & 0.0325  & 0.0309 & 0.0051 & 0.0051 \\ \cline{2-7} 
                                 & resilience & \multicolumn{5}{c|}{\textcolor{red}{N}}                      \\ \hline
\multirow{3}{*}{Quantile~\cite{thakkar2019differentially}}    & accuracy   & 0.976  & 0.631   & 0.659  & 0.836  & 0.986  \\ \cline{2-7} 
                                 & time cost  & 0.0546 & 0.2125  & 0.199  & 0.0192 & 0.0189 \\ \cline{2-7} 
                                 & resilience & \multicolumn{5}{c|}{\textcolor{red}{N}}                      \\ \hline
\multirow{3}{*}{Adaclip~\cite{pichapati2019adaclip}}         & accuracy   & 0.975  & 0.627   & 0.651  & 0.833  & 0.986  \\ \cline{2-7} 
                                 & time cost  & 0.0337 & 0.1884  & 0.1523 & 0.0156 & 0.0153 \\ \cline{2-7} 
                                 & resilience & \multicolumn{5}{c|}{\textcolor{red}{N}}                     \\ \hline
\multirow{3}{*}{Fed-$\alpha$CDP[$\sigma$]}  & accuracy   & \textbf{0.983}  & \textbf{0.646}   & \textbf{0.683}  & \textbf{0.847}  & \textbf{0.993}  \\ \cline{2-7} 
                                 & time cost  & 0.023  & 0.1321  & 0.1128 & 0.0123 & 0.0125 \\ \cline{2-7} 
                                 & resilience & \multicolumn{5}{c|}{Y}                      \\ \hline
\end{tabular}
}}
\label{table:adaptive_clipping}
\vspace{-0.4cm}
\end{table}

\subsection{Performance under different Privacy Accounting}

This set of experiments measures the accuracy and training rounds 
 of Fed-$\alpha$CDP under different privacy accounting methods: base composition~\cite{dwork2014algorithmic}, advanced composition~\cite{dwork2010boosting}, zCDP~\cite{yu2019differentially} and Moments accountant~\cite{abadi2016deep} for a given privacy budget. The set of experiments is conducted on MNIST with the default setting: $N=1000$, $K_t=100$, Clipping bound $C=4$, $\delta=1e-5$, $T=100$. As demonstrated in~\cite{abadi2016deep,yu2019differentially}, 
tracking privacy spending using different privacy accounting method
 is independent of the noise injection process. As shown in~\cite{yu2019differentially}, zCDP can be converted to standard differential privacy. Meanwhile, Moments accountant bridges R\'enyi differential privacy with standard differential privacy~\cite{mironov2017renyi}. Table~\ref{table:accounting} reports the results of different privacy accounting methods. We make two observations. (1) Under the same privacy spending, Fed-$\alpha$CDP with dynamic noise scale would achieve higher accuracy than the baseline approaches. Given the reduced noise due to adaptive noise scale, Fed-$\alpha$CDP[$\sigma$] is able to achieve accuracy comparable to the non-private setting with faster convergence demonstrated by smaller training rounds. (2) When different privacy accounting methods are applied to the same Gaussian noise, their measured accumulated privacy spending are different. For base composition of 100 rounds, the accumulated $\epsilon$ privacy spending is as large as 123.354, for advanced composition 7.450, for zCDP 1.159, and for Moments accountant 0.823. This shows that the privacy budget should be set based on the specific privacy accounting method. And the comparison of privacy spending also need to be under a given privacy accounting method.

\begin{table}[t]
\centering
\scalebox{0.77}{
\small{
\begin{tabular}{|c|c|c|c|c|c|}
\hline
\multicolumn{2}{|c|}{\multirow{2}{*}{target $\epsilon$ measuring acc}} & BaseComp & AdvComp  & zCDP   & MomentsAcc \\ \cline{3-6} 
\multicolumn{2}{|c|}{}                                                     & 123.354  & 7.450     & 1.159  & 0.823   \\ \hline
\multirow{2}{*}{Fed-CDP}       &      accuracy                       & \multicolumn{4}{c|}{0.956}                      \\ \cline{2-6} 
                                      & rounds                          & \multicolumn{4}{c|}{100}                        \\ \hline
\multirow{2}{*}{\begin{tabular}[c]{@{}c@{}}  Fed-$\alpha$CDP  \\ fixed $\sigma$\end{tabular}}     
                                      &     accuracy         & \multicolumn{4}{c|}{0.979}                      \\ \cline{2-6} 
                                      & rounds                          & \multicolumn{4}{c|}{100}                        \\ \hline
\multirow{2}{*}{\begin{tabular}[c]{@{}c@{}}Fed-$\alpha$CDP[$\sigma$] \\ linear $\sigma$ \end{tabular}}         
                                      & accuracy     & \textbf{0.984}    & \textbf{0.981}     & \textbf{0.981}   & \textbf{0.981}   \\ \cline{2-6} 
                                      & rounds                        & 97       & 95          & 95     & 95      \\ \hline
\multirow{2}{*}{\begin{tabular}[c]{@{}c@{}}Fed-$\alpha$CDP[$\sigma$] \\ staircase $\sigma$\end{tabular}}       
                                      &   accuracy   & \textbf{0.983}    & \textbf{0.982}    & \textbf{0.982}  & \textbf{0.982}   \\ \cline{2-6} 
                                      & rounds                         & 97       & 95         & 95     & 95      \\ \hline
\multirow{2}{*}{\begin{tabular}[c]{@{}c@{}} Fed-$\alpha$CDP[$\sigma$] \\ exponential $\sigma$\end{tabular}}     
                                      &  accuracy    & \textbf{0.986}    & \textbf{0.981}    & \textbf{0.983} & \textbf{0.983}    \\ \cline{2-6} 
                                      & rounds                        & 97       & 94         & 96     & 96      \\ \hline
\multirow{2}{*}{\begin{tabular}[c]{@{}c@{}}Fed-$\alpha$CDP[$\sigma$] \\ cyclic $\sigma$\end{tabular}}      
                                      & accuracy      &  \textbf{0.982}   & \textbf{0.979}   &  \textbf{0.981} & \textbf{0.979}    \\ \cline{2-6} 
                                      & rounds                         &     81   & 79       & 80     &    80   \\ \hline
\end{tabular}
}}
\vspace{-0.1cm}
\caption{\small Accuracy measurement of Fed-$\alpha$CDP under different privacy accounting methods: base composition~\cite{dwork2014algorithmic}, advanced composition~\cite{dwork2010boosting}, zCDP~\cite{yu2019differentially} and Moments accountant~\cite{abadi2016deep}. The set of experiments is conducted on MNIST with $N=1000$, $K_t=100$, Clipping bound $C=4$, $\delta=1e-5$, $T=100$. The non-private accuracy for MNIST is 0.980.
}
\label{table:accounting}
\vspace{-0.4cm}
\end{table}

\section{Related Work}
\label{sec:relatedwork}

\textbf{Privacy Leakages in Federated Learning.\/} Despite the default privacy in federated learning, simply keeping client training data local is insufficient for protecting the privacy of sensitive client training data due to gradient leakage attacks. 
The early attempt of gradient leakage~\cite{aono2017privacy} brought theoretical insights by showing provable reconstruction feasibility on a single neuron or single-layer networks in centralized deep learning. Due to less control on the client in the distributed learning, an unauthorized read of client local parameter updates leads to more attack surfaces for gradient leakage attacks~\cite{geiping2020inverting,zhu2019deep,zhao2020idlg,wei2020framework,wang2019beyond,zhu2020r,yin2021see},  
The privacy leakage attack discloses private client training data via unauthorized inference by gradient-based reconstruction. While we consider gradient leakage attacks using L-BFS as the optimizer similar to~\cite{zhu2019deep,zhao2020idlg}, with a small batch size (<16), and shallow models~\cite{wei2020framework}, recent attacks enhance the attack procedure with Adam optimizer for deeper machine learning models~\cite{geiping2020inverting}.  
By utilizing regularization terms for batch disaggregation~\cite{yin2021see,huang2021evaluating}, these attacks can
recover the private training data with a batch size over 30. 
Other known attacks in federated learning include membership inference~\cite{shokri2017membership,nasr2018comprehensive,truex2018towards},  
attribute inference~\cite{melis2019exploiting}, and model inversion~\cite{fredrikson2015model,jagielski2020high}, which can be launched at both client and federated server and cause adverse and detrimental effects when combined with gradient leakage attacks. 


\textbf{Privacy-enhanced Federated Learning.}  
Recent efforts can be classified into three broad categories: (1) {\it Differentially private federated learning}, in which several  proposals~\cite{mcmahan2017learning,geyer2017differentially,wei2020federated} have been put forward, most of which belong to the family of Fed-SDP and provide only client level differential privacy. 
Besides Gaussian mechanism,~\cite{agarwal2018cpsgd} injects differential privacy noise with Binomial mechanism.
\cite{wei2021gradient} has some preliminary studies on per-example per-iteration differential privacy noise for federated learning. All these methods suffer from the fixed privacy parameter setting as discussed in Section~\ref{sec:adaptive}. While many of the advancing techniques for differential privacy, including the two adaptive clipping approaches~\cite{pichapati2019adaclip,thakkar2019differentially} in our comparison study
are investigated in the centralized setting, novel differential privacy techniques need to be developed in the federated setting.
(2) {\it Federated learning with secure multiparty computation}, which incorporates secure multiparty computation (SMPC) to ensure secure communication of shared parameter updates between a client and the federated server~\cite{bonawitz2017practical}.
Therefore, similar to Fed-SDP, it can only secure sharing between a client and the federated server in distributed learning but cannot protect against any privacy attacks happened at the client (type 1 and type 2 gradient leakages) before the cryptography process.
 (3) {\it Federated learning with local differential privacy (LDP)}, which extends the LDP concept and techniques~\cite{kasiviswanathan2011can,gursoy2019secure} developed for protecting user identity when sharing web access log or click data. Although federated learning with LDP~\cite{truex2020ldp} perturbs the per-client gradient updates before sharing with the federated server and~\cite{sun2021ldpfl} locally splits and shuffles the weights before sharing, they do not generate a global model with differential privacy guarantee, and also cannot provide instance-level differential privacy for client training data against gradient leakage attacks. The $\epsilon$ in LDP is defined over the local data record rather than the global data space. 
Homomorphic encryption~\cite{paillier1999public} and trusted execution environments (secure enclaves)~\cite{mo2021ppfl} are alternative solutions to secure federated aggregation against malicious compromises. 
However, their protection will only be effecitve after the data is encrypted or inside the enclave. Hence they can be resilient against the type 0 and type 1 gradient leakages but are vulnerable to the type 2 gradient leakages.




\section{Conclusion}
\label{sec:conclusion}

We have presented the development of Fed-$\alpha$CDP, a gradient leakage resilient approach to securing distributed SGD in federated learning. Fed-$\alpha$CDP elevates the level of resilience against gradient leakages by incorporating per-example differential privacy, $l_2$-max sensitivity, and dynamic decaying noise scale. Instead of using constant noise variance and fixed noise to perturb the local gradient updates each time, these optimizations enable non-uniform noise variance with decreasing trend so that large noise is added in early rounds and smaller noise is used in later rounds of federated learning. A formal analysis is provided on the privacy guarantee of Fed-$\alpha$CDP. Extensive experiments are conducted over five benchmark datasets. We demonstrate that the proposed Fed-$\alpha$CDP approach outperforms the existing state-of-the-art in differentially private federated learning approaches with competitive accuracy performance, strong differential privacy guarantee, and high resilience against gradient leakage attacks.






%



\vspace{0.6cm}

\noindent \textbf{Acknowledgement.}
The authors acknowledge partial support by the National Science Foundation under Grants NSF 2038029, NSF 1564097, a CISCO grant on edge computing, and an IBM faculty award. 

\bibliographystyle{IEEEtran}
\bibliography{bare_jrnl_compsoc}

\vspace{-15pt} 

\begin{IEEEbiography}
[{\includegraphics[width=1in,height=1.25in,clip,keepaspectratio]{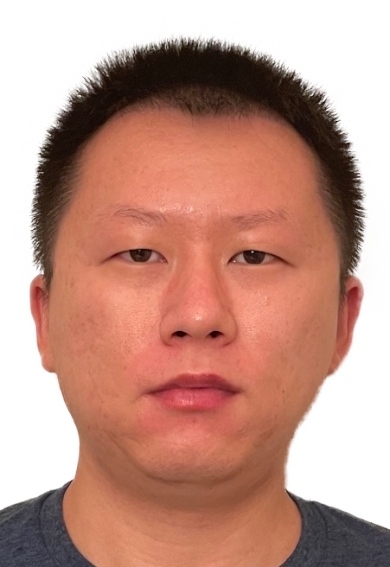}}]
{Wenqi Wei} 
is currently a tenure-track assistant professor in the Computer and Information Sciences Department, Fordham University. He 
obtained his PhD in the School of Computer Science, Georgia Institute of Technology, 
and received his B.E. degree from the School of Electronic Information
and Communications, Huazhong University of Science
and Technology. His research interests include data privacy, machine learning security, and big data analytics. 
\end{IEEEbiography}

\vspace{-15pt}

\begin{IEEEbiography}
[{\includegraphics[width=1in,height=1.25in,clip,keepaspectratio]{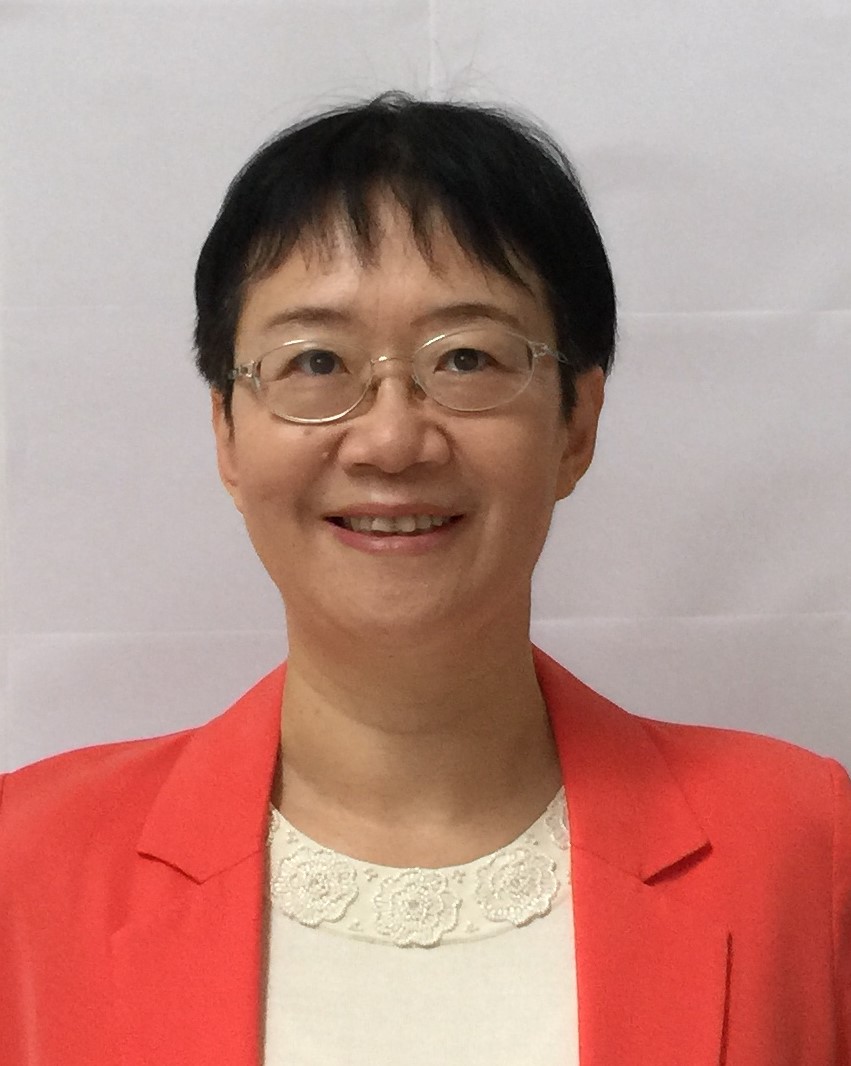}}]
{Ling Liu} 
is a professor in the School of Computer Science, Georgia Institute of Technology. She directs the research programs in the Distributed Data 
Intensive Systems Lab (DiSL). She is an elected IEEE Fellow, a recipient of the IEEE Computer Society Technical Achievement Award in 2012 and a recipient of the best paper award from a dozen of top venues, including ICDCS 2003, WWW 2004, 2005 Pat Goldberg Memorial Best Paper Award, IEEE Cloud 2012, IEEE ICWS 2013, ACM/IEEE CCGrid 2015, and IEEE Symposium on BigData 2016. 
Her current research is primarily sponsored by NSF, IBM, and Intel. 
\end{IEEEbiography}

\vspace{-15pt} 

\begin{IEEEbiography}
[{\includegraphics[width=1in,height=1.25in,clip,keepaspectratio]{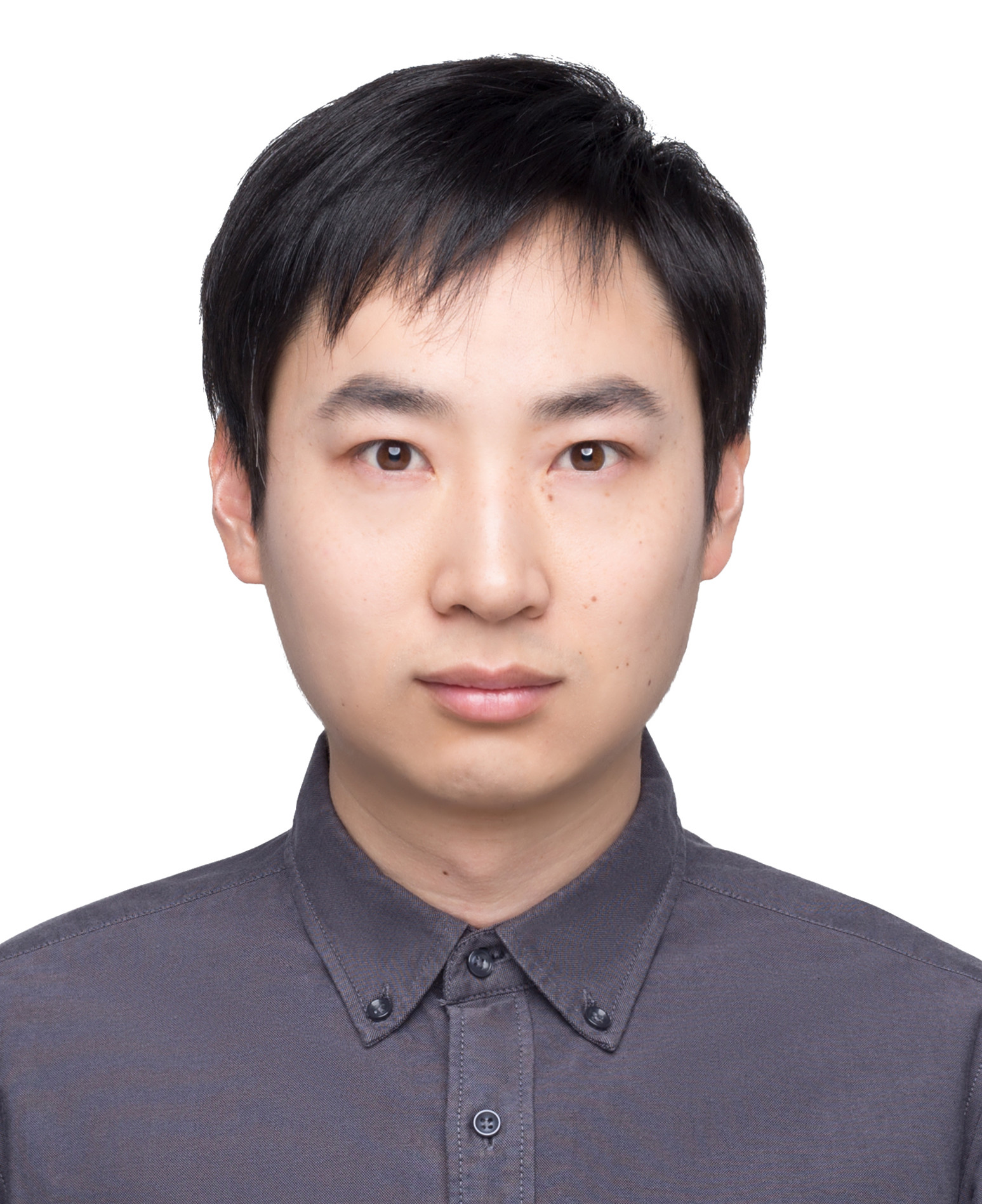}}]
{Jingya Zhou} received the BE degree in computer science from Anhui Normal University, Wuhu, in 2005, and his PhD degree in computer science from Southeast University, Nanjing, in 2013. He is currently an associate professor with the School of Computer Science and Technology, Soochow University, Suzhou, China. His research interests include cloud and edge computing, network embedding, data mining, online social networks and recommender systems.
\end{IEEEbiography}



\vspace{-15pt}



\begin{IEEEbiography}
[{\includegraphics[width=1in,height=1.25in,clip,keepaspectratio]{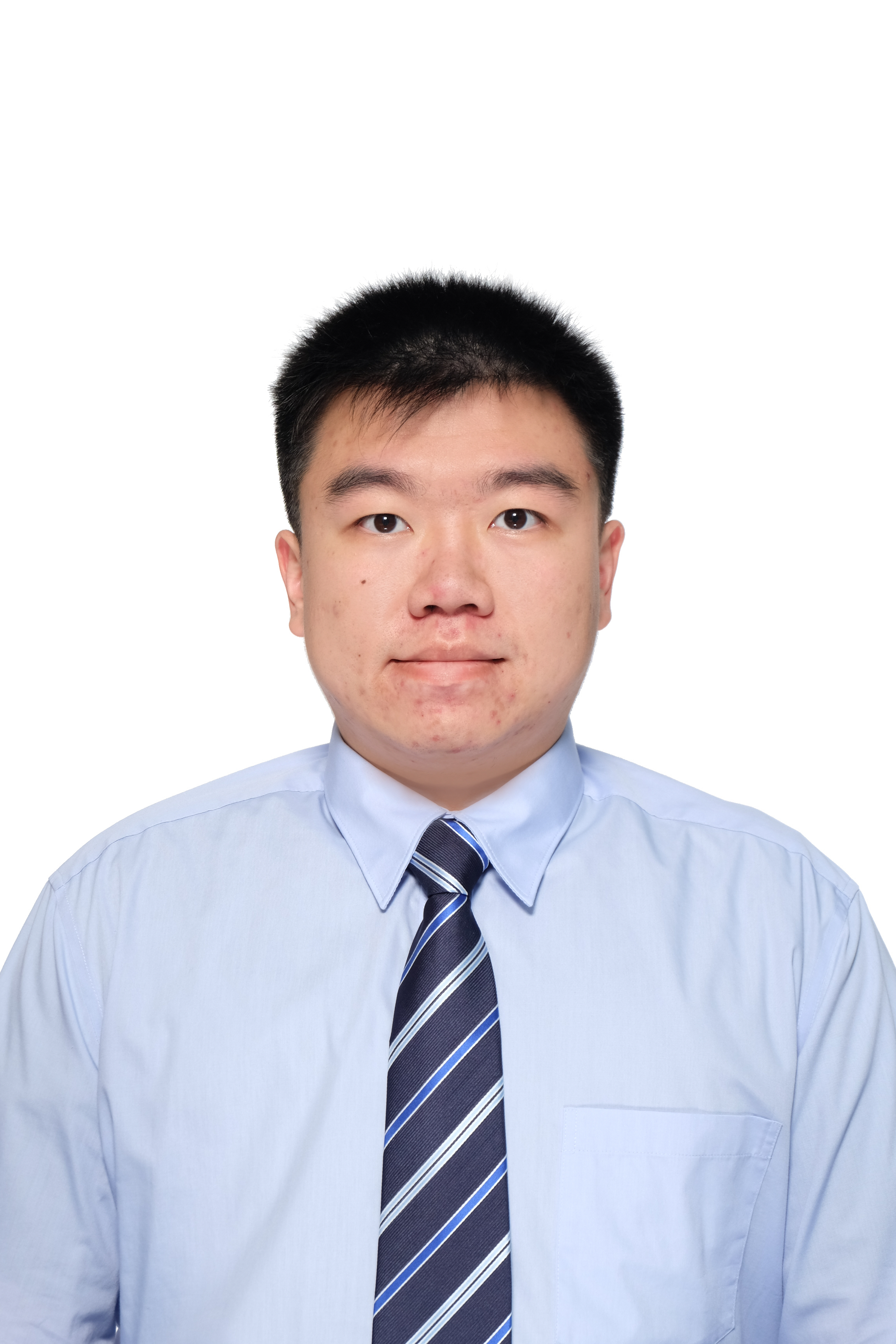}}]
{Ka-Ho Chow} is a Ph.D. candidate in the School of Computer Science at Georgia Institute of Technology, 
and received the BEng and MPhil degrees in computer science from the Hong Kong University of Science and Technology. His research interests include robust machine learning, cybersecurity, ML for systems, and mobile computing.
\end{IEEEbiography} 

\vspace{-15pt} 

\begin{IEEEbiography}
[{\includegraphics[width=1in,height=1.25in,clip,keepaspectratio]{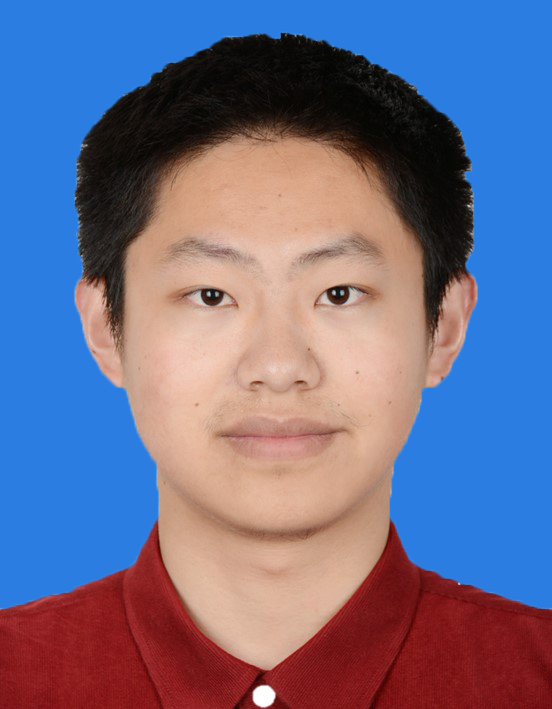}}]
{Yanzhao Wu} is currently a tenure-track assistant professor in the School of Computing and Information Sciences,
Florida International University. He 
obtained his PhD in the School of Computer Science, Georgia Institute of Technology, 
and his B.E. degree from the School of Computer Science and Technology, University of Science and Technology of China. His research interests primarily centered on systems for machine learning and big data, machine learning algorithms for optimizing computing systems, and edge AI.
\end{IEEEbiography}

\end{document}